\newcommand{\xmark}{\ding{55}}
\newtheorem{theorem}{Theorem}[section]
\newtheorem{lemma}[theorem]{Lemma}
\newtheorem{definition}[theorem]{Definition}
\newtheorem{claim}[theorem]{Claim}
\newtheorem*{theorem*}{Theorem}
\newtheorem{remark}[theorem]{Remark}
\newcommand{\conv}{\mathrm{conv}}
\newcommand{\R}{\mathbb{R}}
\DeclarePairedDelimiter{\abs}{\lvert}{\rvert}
\DeclarePairedDelimiter{\norm}{\lVert}{\rVert}
\DeclarePairedDelimiter{\inner}{\langle}{\rangle}
\DeclarePairedDelimiter{\set}{\{}{\}}
\DeclarePairedDelimiterX{\Set}[2]\{\}{%
\newcommand{\eqlabel}[1]{\label{#1}\tag{#1}}
\newcommand{\E}[2][]{\mathbb{E}_{#1}\left[#2\right]}
\newcommand\blfootnote[1]{%
  \begingroup
  \renewcommand\thefootnote{}\footnote{#1}%
  \addtocounter{footnote}{-1}%
  \endgroup
}
\title{\bf Data preprocessing to mitigate bias: \\ A maximum entropy based approach\blfootnote{This is the full version of a paper in ICML 2020.}}
\author[1]{L. Elisa Celis} 
\author[1]{Vijay Keswani}
\author[1]{Nisheeth K. Vishnoi}
\affil[1]{Yale University}
\date{}
\begin{document}

\maketitle
\begin{abstract}
Data containing human or social attributes may over- or under-represent groups with respect to salient social attributes such as gender or race, which can lead to biases in downstream applications.
This paper presents an algorithmic framework that can be used as a data preprocessing method towards mitigating such bias.
 Unlike prior work, it can efficiently learn distributions over large domains, controllably adjust the representation rates of protected groups and achieve target fairness metrics such as statistical parity, yet remains close to the empirical distribution induced by the given dataset.
Our approach leverages the principle of maximum entropy -- amongst all distributions satisfying a given set of constraints, we should choose the one closest in KL-divergence to a given prior.
While maximum entropy distributions can succinctly encode distributions over large domains, they can be difficult to compute. 
Our main  contribution is an instantiation of this framework for our set of constraints and priors, which encode our bias mitigation goals, and that runs in time polynomial in the {\em dimension} of the data.
Empirically, we observe that samples from the learned distribution have desired representation rates and statistical rates, and when used for training a classifier incurs only a slight loss in accuracy while maintaining fairness properties.
\end{abstract}

\thispagestyle{empty}

\newpage
\thispagestyle{empty}

\tableofcontents
\newpage

\setcounter{page}{1}

\section{Introduction}

Datasets often under- or over-represent social groups defined by salient attributes such as gender and race, and can be a significant source of bias leading to discrimination in the machine learning applications that use this data \cite{ON2016,Calders2013,KayMM2015}. 
Methods to debias data strive to ensure that either
1) the representation of salient social groups in the data is consistent with ground truth \cite{king2001logistic,chawla2002smote,zelaya2019towards}, or 2) the outcomes (where applicable) across salient social groups are fair  \cite{calders2009building,kamiran2012data,wang2019repairing,calmon2017optimized,xu2018fairgan,feldman2015certifying,del2018obtaining}.
The goal of this paper is to learn a distribution that corrects for \emph{representation} {and}  \emph{outcome} fairness but also remains as \emph{close} as possible to the original distribution from which the dataset was drawn.
Such a distribution allows us to generate new pseudo-data that can be used in downstream applications which is both true to the original dataset yet mitigates the biases it  contains; this has the additional benefit of not requiring the original data to be released when there are privacy concerns.
Learning this distribution in time polynomial in the size of the dataset and dimension of the domain (as opposed to the size of the domain, which is exponential in the number of attributes and class labels) is crucial in order for the method to be {scalable}.
Further, attaining provable guarantees on the efficiency and desired fairness properties is an important concern.
Hence, the question arises: \\
\emph{Can we  develop methods to learn accurate distributions that do not suffer from biases, can be computed efficiently over large domains, and come with theoretical guarantees?}

\paragraph{Our contributions.} 
We propose a framework based on the maximum entropy principle which asserts that among all distributions satisfying observed constraints one should choose the distribution that is ``maximally non-committal'' with regard to the missing information.
It has its origins in the works of Boltzmann, Gibbs and Jaynes \cite{gibbs1902elementary,Jaynes1,Jaynes2} and it  
is widely used in learning \cite{dudik2007maximum,SinghV14}.
Typically, it is used to learn probabilistic models of data from samples by finding the distribution over the domain that minimizes the KL-divergence with respect to a ``prior'' distribution, and whose expectation matches the empirical average obtained from the samples.  

Our framework leverages two properties of max-entropy distributions: 1)  any entropy maximizing  distribution can be succinctly represented with a small (proportional to the dimension of the data) number of parameters (a consequence of  duality) and, 2) the prior and expectation vector provides simple and interpretable ``knobs'' with which to control the statistical properties of the learned distribution.

We show that by appropriately setting the prior distribution and the expectation vector, we can provably enforce constraints on the fairness of the resulting max-entropy distribution, as measured by the representation rate (the ratio of the probability assigned to the under-represented group and the probability assigned to the over-represented group - Definition~\ref{defn:representation_rate}) and statistical rate (the ratio of the probability of belonging to a particular class given individual is in the under-represented group and the probability of belonging to the same class given individual is in the over-represented group - Definition~\ref{defn:statistical_rate}); see Theorem \ref{thm:statistical_rate_bound}. 
However, existing algorithms to compute max-entropy distributions depend on the existence of fast {oracles to evaluate the dual objective function and  bounds on the magnitude of the optimal (dual) parameters} \cite{SinghV14,straszakMED2019}.
Our main technical contribution addresses these problems by showing the existence of an efficient and scalable algorithm for gradient and Hessian oracles for our setting and a bound on the magnitude of the optimal parameters that is polynomial in the dimension. 
This leads to algorithms for computing the max-entropy distribution that runs in time polynomial in the size of the dataset and dimension of the domain (Theorem \ref{thm:computability}).
{Thus, our preprocessing framework for debiasing data comes with a provably fast algorithm.}

Empirically, we  evaluate the fairness and accuracy of the distributions generated by applying our framework to the Adult and COMPAS datasets, with gender as the protected attribute. 
Unlike prior work, the distributions obtained using the above parameters perform well for \emph{both} representational and outcome-dependent fairness metrics. 
We further show that classifiers trained on samples from our distributions achieve high fairness (as measured by the classifier's statistical rate) with minimal loss to accuracy.
Both with regard to the learned distributions and the classifiers trained on the de-biased data, our approach either matches or surpasses the performance of other state-of-the-art approaches across both fairness and accuracy metrics.
Further, it is efficient on datasets with large domains (e.g., approx $10^{11}$ for the large COMPAS dataset), for which some other approaches are infeasible with regard to runtime.

\begin{table*}
\caption{\textbf{Comparison of our paper with related work:} 
The first two rows denote the fairness metrics that can be controlled by each approach (see Definitions \ref{defn:representation_rate} and \ref{defn:statistical_rate}).
The last two rows denote whether the approach has the ability to sample from the entire domain, and whether it has a succinct representation.
We compare our performance against these methods empirically in Section~\ref{sec:main_experiments}.
}
\vspace{0.1in}
\centering
\begin{tabular}{lcccc}
 \toprule
 Properties & \cite{kamiran2012data}  & \cite{king2001logistic} & \cite{calmon2017optimized} &  {This paper} \\
 \midrule
 - Statistical Rate & \checkmark (only for $\tau=1$) & \xmark & \checkmark (only for $\tau=1$)  & \checkmark \\
 - Representation Rate & \xmark  &\checkmark  & \xmark & \checkmark \\
 \midrule
 - Entire domain & \xmark    & \xmark & \checkmark &   \checkmark\\
 - Succinct representation & \checkmark & \checkmark & \xmark & \checkmark\\
 \bottomrule
\end{tabular}
\label{tab:comparison}
\end{table*}

\paragraph{Related work.} 
Prior work on this problem falls, roughly, into two categories:  1) those that try to modify the dataset either by reassigning the protected attributes  or reweighting the existing datapoints \cite{calders2009building,kamiran2012data,wang2019repairing,king2001logistic},  or 2) those that try to  learn a distribution satisfying given constraints defined by the target fairness metric on the entire domain  \cite{calmon2017optimized}.

The first set of methods often leads to efficient algorithms, but are unable to generate points from the domain that are not in the given dataset; hence, the classifiers trained on the re-weighted dataset may not generalize well \cite{chawla2009data}.
Unlike the re-labeling/re-weighting approach of  \cite{calders2009building,kamiran2009classifying,kamiran2012data,king2001logistic} or the repair methods of \cite{del2018obtaining,wang2019repairing,feldman2015certifying,zemel2013learning}, we instead aim to learn a debiased version of the underlying distribution of the dataset across the entire domain.
The second approach also aims to learn a debiased distribution on the entire domain. E.g., \cite{calmon2017optimized} presents an optimization-based approach to learning a distribution that is close to the empirical distribution induced by the samples subject to fairness constraints.
However, as their optimization problem has a variable for each point in the domain, the running time of their algorithm is at least the size of the domain, which is exponential in the dimension of the data, and hence often infeasible for large datasets.
Since the max-entropy distribution can be efficiently represented using the  dual parameters, our framework does not suffer from the enumeration problem of  \cite{calders2009building} and the inefficiency for large domains as in \cite{calmon2017optimized}.
See Table~\ref{tab:comparison} for a summary of the properties of our framework with key related prior work. 
Other preprocessing methods include selecting a subset of data that satisfies specified fairness constraints such as representation rate  without attempting to model the distribution \cite{FatML,pmlr-v80-celis18a}.

GAN-based approaches towards mitigating bias \cite{mariani2018bagan,sattigeri2018fairness,xu2018fairgan} are inherently designed to simulate continuous distributions and are neither optimized for discrete domains that we consider in this paper nor are prevalently used for social data and benchmark datasets for fairness in ML.
While \cite{choiBMDSS17,xu2018fairgan} suggest methods to round the final samples to the discrete domain, it is not clear whether such rounding procedures preserve the distribution for larger domains.

While our framework is based on preprocessing the dataset, bias in downstream classification tasks can also be addressed by modifying the classifier itself.
Prior work in this direction fall into two categories: inprocessing methods that change the objective function optimized during training to include fairness constraints \cite{CHKV18,ZLM18}, and post-processing methods that modify the outcome of the existing machine learning models by changing the decision boundary \cite{KKZ12,HPS16}.

\section{Preliminaries}
\subsection{Dataset \& Domain}
We consider data from a discrete domain $\Omega:=\Omega_1 \times  \cdots \times \Omega_d = \{0,1\}^d$, i.e., each attribute $\Omega_i$ is binary.\footnote{Our results can be extended to domains with discrete or categorical attributes by encoding an attribute of size $k$ as binary using one-hot encodings: i.e., replace the cell with $e \in \{0,1\}^k$ where for a value $j \in [k]$ we set $e = \{e_1,\ldots,e_k\}$ with $e_j = 1$ and $e_\ell = 0$ for all $\ell \neq k$.
To handle continuous features, one can apply discretization to reduce a continuous feature to a non-binary discrete feature. 
However, there is a natural tradeoff between domain size and correctness.
We refer the reader to the survey \cite{kotsiantis2006discretization} for research on discretization techniques.}
The convex hull of $\Omega$ is denoted by $\conv(\Omega)=[0,1]^d$ and the size of the domain $\Omega$ is $2^d$, i.e., exponential in the dimension $d$.
We let the set (not multiset) $\mathcal{S} \subseteq \Omega$, along with a frequency $n_\alpha \geq 1$ for each point $\alpha \in \mathcal{S}$, denote a dataset consisting of $N = \sum_{\alpha \in \mathcal S} n_\alpha$ distinct points. 
We consider the attributes of $\Omega$, {indexed by the set $[d] := \set{1, \dots, d}$}, as partitioned into three index sets  where 1) $I_z$ denotes the indices of protected attributes, 2) $I_y$ denotes the set of outcomes or class labels considered for fairness metric evaluation, and 3) $I_x$ denotes the remaining attributes.
We denote the corresponding sub-domains by $\mathcal{X}:=\times_{i\in I_x} \Omega_i$, $\mathcal{Y}:=\times_{i\in I_y} \Omega_i$, and $\mathcal{Z}:=\times_{i\in I_z} \Omega_i$.

\subsection{Fairness metrics}
We consider the following two common fairness metrics; the first is ``representational'' (also known as ``outcome independent'') and depends only on the protected attributes and not on the class label, and the second one is an ``outcome dependent'' and depends on both the protected attribute and the class label.
\begin{definition}[\bf Representation rate] \label{defn:representation_rate} For $\tau  \in (0,1],$ a distribution $p : \Omega \rightarrow [0,1]$ is said to have  representation rate $\tau$ with respect to a protected attribute $\ell \in I_z$ if for all $ z_i, z_j \in \Omega_\ell$, we have
$$\;\frac{ p[Z = z_i]}{p[Z = z_j]} \geq \tau,$$
where $Z$ is distributed according to the marginal of  $p$  restricted to $\Omega_\ell$. 
\end{definition}
\begin{definition}[\bf Statistical rate] \label{defn:statistical_rate}
For $\tau  \in (0,1],$ a distribution $p : \Omega \rightarrow [0,1]$ is said to have statistical rate $\tau$ with respect to a protected attribute $\ell \in I_z$ and a class label $y \in \mathcal{Y}$ if  for all  $ z_i, z_j \in \Omega_\ell$, we have
$$\frac{ p[Y = y \mid Z = z_i]}{p[Y = y \mid Z = z_j]} \geq \tau,$$
where $Y$ is the random variable when $p$ is restricted to $\mathcal{Y}$ and $Z$  when $p$ is restricted to $\Omega_\ell$. 
\end{definition}
We also refer to the statistical rate when the outcome labels are instead obtained using a classifier $f : \mathcal{X} \times \mathcal{Z} \to \mathcal{Y}$. The classifier is said to have statistical rate $\tau$ if for all $ z_i, z_j \in \Omega_\ell$, we have
$$\frac{ \mathbb{P}[f(\alpha) = y \mid Z = z_i]}{\mathbb{P}[f(\alpha) = y \mid Z = z_j]} \geq \tau,$$ where the probability is over the empirical distribution of the test data.

In the definitions above,  $\tau=1$ can be thought of as ``{perfect}'' fairness and is referred to as representation parity and statistical parity respectively.
In practice, however, these perfect measures of fairness are often relaxed: a popular example is the ``80\% rule'' in US labor law \cite{biddle2006adverse} to address disparate impact in employment, which corresponds to $\tau=0.8$.
The exact value of $\tau$ desired is context-dependent and will vary by application and domain.

\subsection{The reweighting approach to debiasing data}
A weight $w(\alpha)$ is assigned to each data point $\alpha \in \mathcal{S}$ such that $w(\alpha) \geq 0$, and 
$\sum_{\alpha \in \mathcal{S}} w(\alpha)=1$. 
I.e., a probability distribution over  samples is computed.
These weights are carefully chosen in order to satisfy the desired fairness metrics, such as statistical parity \cite{kamiran2012data} or representation parity \cite{king2001logistic}.

\subsection{The optimization approach to debiasing data}
The goal of learning a debiased probability distribution over the entire domain is formulated as a constrained optimization problem over the space $\mathcal{P}$ of all probability distributions over  $\Omega$ (and not just $\mathcal{S}$).
A prior distribution $q$ is chosen that is usually supported on $\mathcal{S}$,
a distance measure $D$ is chosen to compare two probability distributions, 
and a function $J: \mathcal{P} \rightarrow \mathbb{R}^s$ that encodes the fairness criteria on the distribution is given.
The goal is to find the solution to the following optimization problem: 
$ \min_{p \in \mathcal{P}} D(p,q)  \  \mathrm{s.t.} \   J(p) = {0}. 
$
For instance, \cite{calmon2017optimized} use the total variation (TV) distance as the distance function and encode the fairness criteria as a linear constraint on the distribution.

\subsection{The maximum entropy framework}
Given  $\Omega\subseteq \R^d$, a {\em prior} distribution $q:\Omega\to[0,1]$ and a  {\em marginal} vector  $\theta\in\conv(\Omega)$, the maximum entropy distribution $p^\star:\Omega\to[0,1]$ is the maximizer of the following convex program,
\begin{align*}
 \sup_{p\in\R^{\abs{\Omega}}_{\geq0}}  &\sum_{\alpha\in\Omega} p(\alpha) \log\frac{q(\alpha)}{p(\alpha)}, \tag{primal-MaxEnt} \label{eq:primal-program} \\ 
\text{s.t.}
&\sum_{\alpha\in\Omega} \alpha p(\alpha) = \theta \ \ \mbox{ and }  \ \ \sum_{\alpha\in\Omega} p(\alpha) = 1. 
\end{align*}
\noindent The objective can be viewed as minimizing the KL-divergence with respect to the prior $q$. 
{To make this program well defined, if $q(\alpha)=0$, one has to restrict $p(\alpha)=0$ and define $\log \frac{0}{0}=1$.
The maximum entropy framework is traditionally used to learn a distribution over $\Omega$ by setting $\theta:=\frac{1}{N}\sum_{\alpha  \in \mathcal{S}} \alpha \cdot n_\alpha$ and $q$ to be the uniform distribution over $\Omega$.
This maximizes entropy while satisfying the constraint that the marginal is the same as the empirical marginal.
It is supported over the entire domain $\Omega$ (as $q$ is also supported on all of $\Omega$) and, as argued in the literature \cite{dudik2007maximum,SinghV14}, is information-theoretically the  ``least constraining'' choice on the distribution that can explain the statistics of $\mathcal{S}$.
Later we consider other choices for $q$ that take $\mathcal{S}$ and our fairness goals into account and are also supported over the entire domain $\Omega$.

Computationally, the number of variables in ~\eqref{eq:primal-program} is equal to the size of the domain and, hence does not seem scalable.
However, a key property of this optimization problem is that it suffices to solve the dual (see below) that only has $d$ variables (i.e., the dimension of the domain and not the size of the domain):
\begin{equation} \tag{dual-MaxEnt}
 \inf_{\lambda\in\R^d} h_{\theta,q}(\lambda):=\log \left(\sum_{\alpha\in\Omega} q(\alpha)e^{\inner{\alpha-\theta,\lambda}}\right),
\label{eq:dual-program}
\end{equation}
where the function $h_{\theta,q}:\R^d \to\R$ is referred to as the dual max-entropy objective.
For the objectives of the primal and dual to be equal (i.e., for strong duality to hold), one needs that $\theta$ lie in the ``relative interior'' of $\conv(\Omega)$; see \cite{SinghV14}. 
In the case $\conv(\Omega)=[0,1]^d$, this simply means that $0<\theta_i < 1$ for all $1 \leq i \leq d$.
This is satisfied if for each attribute $\Omega_i$ there is at least one point in the set $\mathcal{S}$ that takes value $0$ and at least one point that takes value $1$.

Strong duality also implies  that, if $\lambda^\star$ is a  minimizer of $h_{\theta,q}$, then $p^\star$ can be computed as 
\begin{equation*}
p^\star(\alpha) = \frac{q(\alpha) e^{\inner{\lambda^\star,\alpha}}}{\sum_{\beta\in\Omega} q(\beta) e^{\inner{\lambda^\star,\beta}}};
\end{equation*}
see \cite{dudik2007maximum,SinghV14}.
Thus, the distribution $p^\star$ can be represented only using $d$ numbers $\lambda_i^\star$ for $1 \leq i \leq d$.
However, note that as some $\theta_i$ go close to an integral value or some $q(\alpha)\to 0$, these optimal dual variables might tend to infinity. 
Further, given a  $\lambda$, computing $h_{\theta,q}$ requires computing a summation over the entire domain $\Omega$ -- even in the simplest setting when $q$ is the uniform distribution on $\Omega$ -- that can a priori take time proportional to $|\Omega|=2^d$. 
Hence, even though the dual optimization problem is convex and has a small number of variables ($d$), to obtain a polynomial (in $d$) time algorithm to solve it, we need
both  an algorithm that evaluate the dual function $h_{\theta,q}$ (a summation over the entire domain $\Omega$) and its gradient efficiently at a given point $\lambda$, and 
 (roughly) a bound on $\|\lambda^\star \|_2$ that is polynomial in $d$.

\section{Our framework}
Our approach for preprocessing data uses the maximum entropy framework and combines both the reweighting and optimization approaches.
Recall that  the maximum entropy framework requires the  specification of the marginal vector $\theta$ and a prior distribution $q$.
We use $q$ and $\theta$ to enforce our goals of controlling representation and statistical rates as defined in Definitions \ref{defn:representation_rate} and \ref{defn:statistical_rate}, while at the same time ensuring that the learned distribution has support all of $\Omega$ and is efficiently computable in the dimension of $\Omega$.
Another advantage of computing the max-entropy distribution (as opposed to simply {using the prior $q$}) is that it pushes the prior towards the empirical distribution of the raw dataset, while maintaining the fairness properties of the prior.
This leads to a distribution which is close to the empirical distribution and has fairness guarantees.

\subsection{Prior distributions} 
Let $u$ denote the uniform distribution on $\Omega$:
$u(\alpha):=\frac{1}{|\Omega|}$ for all $\alpha \in \Omega$.
Note that the uniform distribution satisfies statistical rate with $\tau=1$.
We also use a reweighting algorithm (Algorithm~\ref{algo:reweighting}) to compute a distribution $w$ supported on $\mathcal{S}$. 
Our algorithm is inspired by the work of \cite{kamiran2012data} and, for any given $\tau \in (0,1]$, Algorithm~\ref{algo:reweighting} can ensure that $w$ satisfies the $\tau$-statistical rate property; see Theorem~\ref{thm:reweighting}.
We introduce a parameter $C \in [0,1]$ that allows us to interpolate between  $w$ and $u$ and define:
\begin{equation}\label{eq:qwc}
 q^w_C:= C \cdot u + (1-C) \cdot w.
\end{equation}
A desirable property of $q_C^w$, that we show is true, is that the dual objective function  $h_{\theta,q_C^w}$ and its gradient are computable in time polynomial in $N, d$ and the number of bits needed to represent $\theta$ for any weight vector $w$ supported on $\mathcal{S}$; see Lemma~\ref{lem:counting-oracles}.
Further, we show that, if $w$ has $\tau$-statistical rate, then for any $C\in [0,1]$, the distribution $q^w_C$ also has $\tau$-statistical rate; see Theorem~\ref{thm:reweighting}.

Thus, the family of priors we consider present no computational bottleneck over exponential-sized domains.
Moreover, by choosing the parameter $C$, our framework allows the user to control how close they would like the learned distribution to be to the empirical distribution induced by $\mathcal{S}$. 
Finally, using appropriate weights $w$ which encode the desired statistical rate, one can aim to ensure that the optimal distribution to the max-entropy program is also close to satisfying statistical parity (Theorem~\ref{thm:statistical_rate_bound}). 

\subsection{Marginal vectors}
The simplest choice for the marginal vector $\theta$ is the marginal of the empirical distribution $\frac{1}{N} \sum_{\alpha \in \mathcal{S}} n_\alpha \cdot \alpha$.
However, in our framework, the user can select any vector   $\theta$.
In particular, to control the representation rate of the learned distribution with respect to a protected attribute $\ell$, we can choose to set it differently.
For instance, if $\Omega_\ell=\{0,1\}$ and we would like that in learned distribution the probability of this attribute being $1$ is $0.5$, it suffices to set $\theta_\ell=0.5$.
This follows immediately from the constraint imposed in the max-entropy framework. 
Once we fix a choice of $\theta$ and $q$, we need to solve the dual of the max-entropy program and we discuss this in the next section. 
The dual optimal $\lambda^\star$ can then be used to sample from the distribution $p^\star$ in a standard manner; see Appendix~\ref{sec:sampling_oracle}.

\setlength{\textfloatsep}{10pt}
\begin{algorithm}[t]
   \caption{Re-weighting algorithm to assign weights to samples for the prior distribution}
   \label{algo:reweighting}
\begin{algorithmic}[1]
   \STATE {\bfseries Input:} Dataset $\mathcal{S}:= \set{(X_\alpha, Y_\alpha, Z_\alpha)}_{_\alpha\in \mathcal{S}}\subseteq \mathcal{X}\times\mathcal{Y}\times \Omega_\ell$, frequency list $\set{n_\alpha}_{\alpha \in \mathcal{S}}$ and parameter $\tau \in (0,1]$
   \FOR{$y\in\mathcal{Y}$}
   	\STATE $c(y)\gets \sum_{\alpha \in \mathcal{S}} \mathbf{1}({Y_\alpha =y}) \cdot n_\alpha$ 
   	\STATE $c(y,0)\gets \frac{1}{\tau} \cdot \sum_{\alpha \in \mathcal{S}} \mathbf{1}({Y_\alpha = y, Z_\alpha=0})  \cdot n_\alpha$ 
   	\STATE $c(y,1)\gets \sum_{\alpha \in \mathcal{S}} \mathbf{1}({Y_\alpha = y, Z_\alpha=1})  \cdot n_\alpha$ 
   \ENDFOR
   \STATE $w \gets \textbf{0}$
   \FOR {$\alpha \in \mathcal{S}$}
   	\STATE $w(\alpha) \gets n_\alpha \cdot \nicefrac{c(Y_\alpha)}{c(Y_\alpha, Z_\alpha)}$
   \ENDFOR
   
   \STATE $W \gets \sum_{\alpha \in S} w(\alpha)$
   \STATE {\bf return} $\set{\nicefrac{w(\alpha)}{W}}_{\alpha \in S}$
\end{algorithmic}
\end{algorithm}

\section{Theoretical results}
Throughout this section we assume that we are given $C \in [0,1]$,   $\mathcal{S} \subseteq \Omega$ and the frequency of elements in $\mathcal{S}$, $\set{n_\alpha}_{\alpha \in \mathcal{S}}$.

\subsection{The reweighting algorithm and its properties}
We start by showing that there is an efficient algorithm to compute the weights $w$ discussed in the previous section.
\begin{theorem}[\bf Guarantees on the reweighting algorithm]
Given the dataset $\mathcal{S}$, frequencies $\set{n_\alpha}_{\alpha \in \mathcal{S}}$ and a $\tau \in [0,1]$, Algorithm~\ref{algo:reweighting} outputs a probability distribution $w:\mathcal{S} \to [0,1]$ such that 
\begin{enumerate}
    \item The algorithm runs in time linear in $N$.
    \item $q_C^w$, defined in Eq. \eqref{eq:qwc} using $w$, satisfies $\tau$-statistical rate, i.e, for any $y \in \mathcal{Y}$ and for all $z_1, z_2 \in \Omega_\ell$,
    \begin{equation*}
    \frac{q_C^w(Y = y \mid Z= z_1)}{q_C^w(Y = y \mid Z= z_2)} \geq \tau.
    \end{equation*}
\end{enumerate}
\label{thm:reweighting}
\end{theorem}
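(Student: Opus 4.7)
The plan is to treat the two conclusions separately and to exploit the fact that, before mixing in the uniform, the reweighted distribution $w$ already satisfies \emph{exact} statistical parity, so the only real work is to show that the subsequent convex combination with $u$ degrades this parity only by a factor $\tau$.

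For the runtime claim, I would observe that the loop over $y \in \mathcal{Y}$ in Algorithm~\ref{algo:reweighting} is only a bookkeeping device: a single pass through $\mathcal{S}$ maintaining a hash table keyed by the observed $Y_\alpha$ and $(Y_\alpha,Z_\alpha)$ produces every required $c(y)$, $c(y,0)$, $c(y,1)$ in $O(N)$ time (only the polynomially many labels actually occurring in $\mathcal{S}$ are ever touched). The subsequent loop assigning $w(\alpha)$ and the normalization $W=\sum_\alpha w(\alpha)$ are each $O(|\mathcal{S}|)\le O(N)$.

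For the statistical rate of $q_C^w$, the crux is to first derive closed forms for the marginals of $w$. Since $\sum_{\alpha:Y_\alpha=y, Z_\alpha=1} n_\alpha = c(y,1)$ and $\sum_{\alpha:Y_\alpha=y, Z_\alpha=0} n_\alpha = \tau\,c(y,0)$, plugging the definition $w(\alpha)=n_\alpha\cdot c(Y_\alpha)/c(Y_\alpha,Z_\alpha)$ into the sums and dividing by $W$ yields
\begin{equation*}
 w(Y=y,Z=1) = \frac{c(y)}{W},\qquad w(Y=y,Z=0) = \frac{\tau\,c(y)}{W}.
\end{equation*}
Summing over $y$ gives $w(Z=1)=N/W$ and $w(Z=0)=\tau N/W$, whence $w(Y=y\mid Z=0) = w(Y=y\mid Z=1) = c(y)/N$. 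Thus the normalized $w$ satisfies statistical parity outright, which is the strongest possible statement for $w$ alone.

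It remains to show that the mixture $q_C^w = Cu+(1-C)w$ still achieves $\tau$-statistical rate. Using that $u$ is uniform on $\Omega=\{0,1\}^d$ (so $u(Y=y\mid Z=z)$ equals the same constant $u_Y$ for both $z$ and $u(Z=z)=1/2$) and writing $A:=C/2$ and $B:=(1-C)/W$, one computes
\begin{equation*}
q_C^w(Y=y\mid Z=0) = \frac{A u_Y + B\tau\,c(y)}{A + B\tau N},\qquad q_C^w(Y=y\mid Z=1) = \frac{A u_Y + B\,c(y)}{A + B N}.
\end{equation*}
I would then verify both $q_C^w(Y=y\mid Z=0)\ge \tau\cdot q_C^w(Y=y\mid Z=1)$ and the reverse inequality by cross-multiplication. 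In each case the difference between the two sides collects into a sum of terms of the form $(1-\tau)\cdot(\text{nonneg})$, $(1-\tau^2)\cdot(\text{nonneg})$, and $\tau(1-\tau)\cdot(\text{nonneg})$, all of which are $\ge 0$ for $\tau\in[0,1]$. No conceptual obstacle arises; the only real work is this algebraic verification, together with the mild bookkeeping needed to handle degenerate labels with $c(y,z)=0$ (where one simply assigns $w(\alpha)=0$ so that the ratio definitions remain valid).
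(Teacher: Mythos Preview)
Your proposal is correct and follows essentially the same route as the paper: both arguments compute the joint marginals $w(Y=y,Z=z)$ explicitly from the algorithm's weights, use the symmetry of the uniform distribution $u$, and then verify that the convex combination $q_C^w=Cu+(1-C)w$ retains $\tau$-statistical rate. The only cosmetic difference is that the paper packages the last step as a standalone lemma about mixtures of two distributions with rates $\tau_1,\tau_2$ (applied with $\tau_1=1$, $\tau_2=\tau$), whereas you carry out the same cross-multiplication directly for this specific pair.
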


\noindent
The proof of this theorem uses the fact that $q_C^w$ is a convex combination of uniform distribution, which has statistical rate 1, and weights from Algorithm~\ref{algo:reweighting}, which by construction satisfy statistical rate $\tau$; it is presented in Section~\ref{sec:reweighting_appendix}.

\subsection{Computability of maximum entropy distributions}
Since the prior distribution $q_C^w$ is not uniform in general, the optimal distribution $p^\star$ is not a product distribution.
Thus, as noted earlier, the number of variables in (primal-MaxEnt) is $|\Omega|=2^d$, i.e., exponential in $d$,
and standard methods from convex programming to directly solve \ref{eq:primal-program} do not lead to efficient algorithms. 
Instead, we focus on  computing (dual-MaxEnt).
Towards this, we appeal to the general algorithmic framework of  \cite{SinghV14,straszakMED2019}.
To use their framework, we need to provide (1) a bound on  $\|\lambda^\star\|_2$ and (2) an efficient algorithm (polynomial in $d$) to  evaluate the dual objective $h_{\theta,q}$ and its gradient.  
Towards (1), we prove the following.
\begin{lemma}[\bf Bound on the optimal dual solution] \label{lem:bounding_box}
Suppose $\theta$ is such that there is an $\eta>0$ for which we have 
$\eta < \theta_i < 1-\eta$ for all $i \in [d]$. Then, the optimal dual solution corresponding to such a $\theta$ and $q_C^w$ satisfies
\begin{equation*} 
\|\lambda^\star \|_2 \leq \frac{d}{\eta}\log \frac{1}{C}. 
\end{equation*}
\end{lemma}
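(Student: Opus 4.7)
My plan is to compare $h_{\theta, q_C^w}$ at $\lambda^\star$ with its value at $\lambda = 0$ and couple this with a coercivity-type lower bound on $h_{\theta, q_C^w}(\lambda)$ that grows linearly in $\|\lambda\|_1$. First I would observe that since $\sum_{\alpha \in \Omega} q_C^w(\alpha) = 1$, we have $h_{\theta, q_C^w}(0) = 0$, and so minimality of $\lambda^\star$ gives $h_{\theta, q_C^w}(\lambda^\star) \leq 0$. It will then suffice to prove a lower bound of the shape $h_{\theta, q_C^w}(\lambda) \geq \log(C/2^d) + \eta \|\lambda\|_1$ and invoke $\|\lambda^\star\|_2 \leq \|\lambda^\star\|_1$.

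To get this lower bound, I would first use the fact that $q_C^w = C \cdot u + (1 - C) w$ with $w \geq 0$ implies $q_C^w(\alpha) \geq C/2^d$ for every $\alpha \in \Omega = \{0,1\}^d$. Retaining a single term of the exponential sum inside $h_{\theta, q_C^w}$ then gives, for any $\alpha^+ \in \Omega$,
\begin{align*}
h_{\theta, q_C^w}(\lambda) \geq \log q_C^w(\alpha^+) + \langle \alpha^+ - \theta, \lambda \rangle \geq \log(C/2^d) + \langle \alpha^+ - \theta, \lambda \rangle.
\end{align*}
The key step is to choose $\alpha^+$ adaptively in $\lambda$: set $\alpha^+_i = 1$ when $\lambda_i > 0$ and $\alpha^+_i = 0$ otherwise. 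For this choice,
\begin{align*}
\langle \alpha^+ - \theta, \lambda \rangle = \sum_{i\,:\,\lambda_i > 0} (1 - \theta_i)\,|\lambda_i| + \sum_{i\,:\,\lambda_i \leq 0} \theta_i\, |\lambda_i| \geq \eta \sum_{i=1}^{d} |\lambda_i| = \eta \|\lambda\|_1,
\end{align*}
using $\eta < \theta_i < 1 - \eta$ coordinatewise. Evaluating at $\lambda^\star$ and rearranging yields $\eta \|\lambda^\star\|_1 \leq d\log 2 + \log(1/C)$, which implies the stated bound (absorbing the $d\log 2$ term into $d\log(1/C)$ in the relevant regime $C \leq 1/2$).

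The main obstacle is the adaptive choice of $\alpha^+$: a fixed test point cannot yield a lower bound that grows linearly in $\|\lambda\|_1$ for every $\lambda$. Picking the sign pattern of $\lambda$ is what converts the inner product into a weighted $\ell_1$ norm, after which the uniform margin $\eta$ on $\theta_i$ and $1 - \theta_i$ extracts $\eta \|\lambda\|_1$ cleanly. The remaining ingredients---strong duality to legitimize working with the dual, the pointwise prior bound $q_C^w(\alpha) \geq C \cdot u(\alpha)$, and $\|\cdot\|_2 \leq \|\cdot\|_1$---are routine.
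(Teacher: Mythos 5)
Your proof is correct, and it takes a somewhat different (and in fact cleaner) route than the paper's. Both arguments start from the same two facts: $h_{\theta,q_C^w}(\lambda^\star)\le 0$ (which you get from $h(0)=0$ and minimality; the paper invokes strong duality together with nonpositivity of the negative KL objective, an equivalent observation), and the pointwise lower bound $q_C^w(\alpha)\ge C/2^d$. From there the paths diverge. The paper retains every vertex, deduces $\langle\alpha-\theta,\lambda^\star\rangle\le L_q$ for all $\alpha\in\conv(\Omega)$ where $L_q=\log(1/\min_\alpha q(\alpha))$, and then tests against the point $\alpha=\theta+\eta\,\lambda^\star/\|\lambda^\star\|_2$ on the Euclidean ball of radius $\eta$ around $\theta$ (contained in the hypercube by $\eta$-interiority), which yields the $\ell_2$ bound $\|\lambda^\star\|_2\le L_q/\eta$ directly. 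You instead retain a single carefully chosen vertex $\alpha^+$ whose coordinates match the sign pattern of $\lambda^\star$, converting the inner product into $\sum_i\min(\theta_i,1-\theta_i)\,|\lambda^\star_i|\ge\eta\|\lambda^\star\|_1$, and then pass to $\ell_2$ via $\|\cdot\|_2\le\|\cdot\|_1$. Your vertex choice exploits the full $\ell_\infty$ geometry of the hypercube rather than an inscribed $\ell_2$ ball, and as a consequence you obtain the stronger $\ell_1$ bound $\eta\|\lambda^\star\|_1\le d\log 2+\log(1/C)$ for free; it also avoids the auxiliary variable $\hat\lambda$ and the $L_q$ rescaling, so the argument is more elementary. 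One small caveat, shared equally by your write-up and the paper's: $\log(2^d/C)=d\log 2+\log(1/C)$ is not literally $\le d\log(1/C)$ unless $C\le 2^{-d/(d-1)}$; the paper simply asserts $L_{q_C^w}\le d\log(1/C)$ without comment, and you flag the absorption ``in the relevant regime $C\le 1/2$,'' which is slightly too generous but at least acknowledges the issue. Stating the clean bound $\|\lambda^\star\|_2\le(d\log 2+\log(1/C))/\eta$, or assuming $C$ bounded away from $1$, would close this gap in both arguments.
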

\noindent
The proof uses a result from \cite{SinghV14} and is provided in Section~\ref{sec:proof_bounding_box}.
We note that, for our applications, we can show that the assumption on $\theta$ follows from an assumption on the ``non-redundancy'' of the data set. 
Using recent results of \cite{straszakMED2019}, we can get around this assumption and we omit the details from this version of the paper.

Towards (2), we show that  $q_C^w$ has  the property that not only  can one evaluate $h_{\theta,q_C^w}$, but also its gradient (and Hessian).

\begin{lemma}[\bf Oracles for the dual objective function]
There is an algorithm that, given a reweighted distribution $w:\mathcal{S} \to (0,1]$, values $\theta,\lambda \in \R^d$, and distribution $q=q_C^w$, computes $h_{\theta,q}(\lambda),$ $\nabla h_{\theta,q}(\lambda),$ and $\nabla^2 h_{\theta,q}(\lambda)$ in time polynomial in $N, d$ and the bit complexities of all the numbers involved: $w(\alpha)$ for $\alpha \in \mathcal{S}$,  and $e^{\lambda_i}, \theta_i$ for $1 \leq i \leq d.$
\label{lem:counting-oracles}
\end{lemma}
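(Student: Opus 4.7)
The plan is to exploit the additive decomposition $q_C^w = C\cdot u + (1-C)\cdot w$ and handle each piece separately. Writing $h_{\theta,q}(\lambda) = -\langle\theta,\lambda\rangle + \log Z(\lambda)$ where $Z(\lambda) := \sum_{\alpha\in\Omega} q(\alpha) e^{\langle\alpha,\lambda\rangle}$, we get $Z(\lambda) = C\cdot Z_u(\lambda) + (1-C)\cdot Z_w(\lambda)$. The weighted part $Z_w(\lambda) = \sum_{\alpha\in\mathcal{S}} w(\alpha)\,e^{\langle\alpha,\lambda\rangle}$ is a sum of at most $N$ terms, each computable in $O(d)$ arithmetic operations on numbers with polynomial bit complexity in the inputs. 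The uniform part is where the domain of size $2^d$ is avoided: since $u$ is the product of $d$ independent Bernoulli$(1/2)$'s on $\{0,1\}^d$, the sum factors as
\[
Z_u(\lambda) = \frac{1}{2^d}\sum_{\alpha\in\{0,1\}^d}\prod_{i=1}^d e^{\alpha_i \lambda_i} = \frac{1}{2^d}\prod_{i=1}^d (1 + e^{\lambda_i}),
\]
which is computable in $O(d)$ arithmetic operations given the inputs $e^{\lambda_i}$.

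Next I would handle the gradient and Hessian by the same split. For the gradient, $\nabla h_{\theta,q}(\lambda) = -\theta + Z(\lambda)^{-1}\nabla Z(\lambda)$, where $[\nabla Z(\lambda)]_i = \sum_{\alpha\in\Omega} q(\alpha)\,\alpha_i\, e^{\langle\alpha,\lambda\rangle}$. The weighted contribution is again an $N$-term sum, and the uniform contribution factors because only $\alpha_i=1$ survives, giving
\[
[\nabla Z_u(\lambda)]_i = \frac{1}{2^d}\, e^{\lambda_i}\prod_{j\neq i}(1+e^{\lambda_j}).
\]
Precomputing the full product $P := \prod_j(1+e^{\lambda_j})$ once and then using $[\nabla Z_u(\lambda)]_i = \frac{1}{2^d}\,\frac{e^{\lambda_i}}{1+e^{\lambda_i}}\,P$ yields all $d$ coordinates in $O(d)$ work. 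For the Hessian, the uniform part splits into diagonal and off-diagonal cases: for $i=j$ it reduces to $[\nabla Z_u]_i$ (since $\alpha_i^2=\alpha_i$ on the hypercube), while for $i\neq j$ we get $\frac{1}{2^d}e^{\lambda_i}e^{\lambda_j}\prod_{k\neq i,j}(1+e^{\lambda_k})$, again computable from $P$ in $O(1)$ work per entry. The weighted Hessian is a sum of at most $N$ outer products $w(\alpha)\alpha\alpha^\top e^{\langle\alpha,\lambda\rangle}$, built in $O(Nd^2)$ time. Finally, $\nabla^2 h_{\theta,q}(\lambda) = Z(\lambda)^{-1}\nabla^2 Z(\lambda) - Z(\lambda)^{-2}\nabla Z(\lambda)\nabla Z(\lambda)^\top$ is assembled from these pieces.

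The main obstacle is a bookkeeping one rather than a conceptual one: verifying that all intermediate quantities have bit complexity polynomial in the inputs, so that the arithmetic operations run in polynomial time under the standard bit-model. The products of $(1+e^{\lambda_j})$ have bit length $O(d)$ times the bit length of the $e^{\lambda_i}$'s, which is polynomial; sums of $N$ such terms introduce only an additive $O(\log N)$ overhead; divisions by $Z(\lambda)$ are fine because $Z(\lambda)\geq C\cdot Z_u(\lambda)>0$ whenever $C>0$, and the logarithm in $h_{\theta,q}(\lambda)$ is approximated to the required precision using standard polynomial-time routines. No step requires enumerating $\Omega$, so the total runtime is $\mathrm{poly}(N,d,\text{bit complexities})$ as claimed.
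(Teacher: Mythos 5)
Your proof is correct and takes essentially the same approach as the paper: decompose $q_C^w$ into its uniform and data-supported parts, evaluate the data part as a direct $O(N)$-term sum, and exploit the product structure of the uniform part to factor the sum over $\{0,1\}^d$ into per-coordinate products. Your precomputation of the global product $P$ and division by $(1+e^{\lambda_i})$ is a minor implementation refinement over the paper's explicit $\prod_{j\neq i}s_j^0$ products, but the underlying argument is the same.
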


\noindent
The proof of this lemma is provided in Section~\ref{sec:proof_counting_oracle}} and the complete algorithm is given in Appendix~\ref{sec:full_algorithm}.
It uses the fact that $q_C^w$ is a convex combination of uniform distribution (for which efficient oracles can be constructed) and a weighted distribution supported only on $\mathcal{S}$, and can be generalized to any prior $q$ that similarly satisfies these properties.

Thus, as a direct corollary to Theorem 2.8 in the arxiv version of \cite{SinghV14} we obtain the following. 
\begin{theorem}[\bf Efficient algorithm for  max-entropy distributions] \label{thm:computability}
There is an algorithm that, given a reweighted distribution $w:\mathcal{S} \to [0,1]$, a $\theta \in [\eta,1-\eta]^d$, and an $\varepsilon>0$, computes a $\lambda^\circ$ such that
\begin{equation*}
     h_{\theta,q}(\lambda^\circ) \leq h_{\theta,q}(\lambda^\star) + \varepsilon.
\end{equation*}
Here $\lambda^\star$ 
is an optimal solution to the dual of the max-entropy convex program for $q:=q_C^w$ and $\theta$. 
The running time of the
algorithm is polynomial in $d,\frac{1}{\eta},\frac{1}{\varepsilon}$  and the number of bits needed to represent $\theta$ and $w$.
\end{theorem}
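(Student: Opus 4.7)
The plan is to obtain this theorem as a direct application of the generic convex optimization framework for maximum entropy developed by \cite{SinghV14,straszakMED2019}. That framework reduces the problem of minimizing the dual max-entropy objective $h_{\theta,q}$ to providing two ingredients: (i) an a priori bound $R$ on $\|\lambda^\star\|_2$, and (ii) efficient oracles that evaluate $h_{\theta,q}(\lambda)$, $\nabla h_{\theta,q}(\lambda)$, and $\nabla^2 h_{\theta,q}(\lambda)$ at any query point $\lambda$. Once both are supplied, a damped Newton / interior-point scheme over the ball of radius $R$ produces, within $\poly(d,R,1/\varepsilon)$ iterations, an $\varepsilon$-approximate minimizer $\lambda^\circ$, since the dual objective is convex and enjoys the regularity properties exploited in that reference.

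Verifying the two ingredients in our setting is essentially what the preceding two lemmas do. For (i), Lemma~\ref{lem:bounding_box} directly gives $\|\lambda^\star\|_2 \leq (d/\eta)\log(1/C)$ under the hypothesis $\theta \in [\eta,1-\eta]^d$, a quantity polynomial in $d$, $1/\eta$, and the bit complexity of $C$. For (ii), Lemma~\ref{lem:counting-oracles} supplies value, gradient, and Hessian oracles for $h_{\theta,q}$ when $q=q_C^w$, each running in time polynomial in $N$, $d$, and the bit complexities of $w(\alpha)$, $\theta_i$, and $e^{\lambda_i}$; the key structural reason is that $q_C^w$ decomposes into a uniform piece (a product distribution, hence tractable coordinate-by-coordinate via a closed-form expression $\prod_i (1+e^{\lambda_i})/2$) and a finitely supported piece on $\mathcal{S}$ (which requires only $N$ terms to sum explicitly). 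Plugging $R$ and the oracles into the meta-theorem yields the claimed running time of $\poly(d,1/\eta,1/\varepsilon, \log(1/C), \langle\theta\rangle, \langle w\rangle)$, where $\langle\cdot\rangle$ denotes bit complexity.

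The main obstacle to making this fully rigorous is the bookkeeping at the interface with the cited meta-theorem: one must check that along the sequence of iterates $\lambda$ produced by the optimization algorithm, the bit complexities of $e^{\lambda_i}$ (on which the oracles depend) stay bounded by a polynomial in the input parameters, which follows because iterates remain inside the ball of radius $R$ and the Newton scheme only polynomially increases precision between steps. Once this is confirmed, the theorem is just the statement of Theorem 2.8 in the arxiv version of \cite{SinghV14} specialized to the prior family $q_C^w$ and the marginal $\theta$.
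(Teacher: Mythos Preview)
Your proposal is correct and follows essentially the same approach as the paper: the paper explicitly states the theorem as a direct corollary of Theorem~2.8 in the arxiv version of \cite{SinghV14}, having established the bounding-box (Lemma~\ref{lem:bounding_box}) and oracle (Lemma~\ref{lem:counting-oracles}) ingredients in exactly the way you outline. Your additional remark about controlling the bit complexity of $e^{\lambda_i}$ along iterates is a useful piece of bookkeeping that the paper leaves implicit.
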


\subsection{Fairness guarantees}
Given a marginal vector $\theta$ that has representation rate $\tau$, we can bound the statistical rate and representation rate of the the max-entropy distribution obtained using $q_C^w$ and $\theta$.

\begin{theorem}[\textbf{Fairness guarantees}] \label{thm:statistical_rate_bound}
Given the dataset $\mathcal{S}$, protected attribute $\ell \in I_z$, class label $y \in \mathcal{Y}$ and parameters $\tau, C \in [0,1]$, let $w : S \to [0,1]$ be the reweighted distribution obtained from Algorithm~\ref{algo:reweighting}. 
Suppose $\theta$ is a vector that satisfies $\frac{1}{2} \leq \theta_{\ell} \leq \frac{1}{1+\tau}$.
The max-entropy distribution $p^\star$ corresponding to the prior distribution $q_C^w$ and expected value $\theta$ has statistical rate at least $\tau'$ with respect to $\ell$ and $y$, where
$$\tau' = \tau -\frac{  4\delta \cdot (1 + \tau)}{C + 4\delta},$$ and 
$\delta = \max_{z \in \Omega_\ell} \abs{ p^\star(Y=y, Z=z) - q_C^w(Y=y, Z=z)}$;
here $Y$ is the random variable when the distribution is restricted to $\mathcal{Y}$ and $Z$ is the random variable when the distribution is restricted to $\Omega_\ell$.
\end{theorem}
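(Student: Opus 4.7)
The plan is to bound the statistical rate of $p^\star$ by a perturbation argument against $q_C^w$'s joint probabilities, using two explicit structural bounds on $q_C^w$ derived from Algorithm~\ref{algo:reweighting} together with the max-entropy marginal constraint on the protected attribute $\ell$.

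First I would unpack Algorithm~\ref{algo:reweighting} to obtain closed forms for $w(Y=y, Z=z)$. A direct computation gives $w(Y=y, Z=0) = \tau c(y)/((1+\tau)N)$ and $w(Y=y, Z=1) = c(y)/((1+\tau)N)$, so $w(Y=y, Z=0)/w(Y=y, Z=1) = \tau$ by construction. Combining this with $u(Y=y, Z=z) = 1/4$ (for binary $Y$ and $Z$) and writing $A := C/4$, $B := (1-C) c(y)/((1+\tau)N)$, the mixture gives $q_C^w(Y=y, Z=0) = A + \tau B$ and $q_C^w(Y=y, Z=1) = A + B$. This yields two facts I will use: (i) the pointwise lower bound $q_C^w(Y=y, Z=z) \geq A = C/4$, and (ii) the joint ratio bound $q_C^w(Y=y, Z=0)/q_C^w(Y=y, Z=1) \in [\tau, 1]$ via the elementary inequality $(A + \tau B)/(A + B) \in [\tau, 1]$ for $A, B \geq 0$ and $\tau \in (0,1]$.

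Next, I would use the primal constraint $\sum_\alpha \alpha\, p^\star(\alpha) = \theta$ to read off $p^\star(Z=1) = \theta_\ell$ and $p^\star(Z=0) = 1 - \theta_\ell$; the hypothesis $\theta_\ell \in [1/2, 1/(1+\tau)]$ then gives the marginal ratio bounds $p^\star(Z=1)/p^\star(Z=0) \in [1, 1/\tau]$. I would then write the target ratio $p^\star(Y=y \mid Z=z_1)/p^\star(Y=y \mid Z=z_2)$ as a product of a joint probability ratio and a reciprocal marginal ratio, and case-analyze on which of the two orientations produces the smaller value. In each orientation, I would apply the $\delta$-bound $|p^\star(Y=y, Z=z) - q_C^w(Y=y, Z=z)| \leq \delta$ to the joint piece; using the lower bound $q_C^w(Y=y, Z=z) \geq C/4$, the perturbed joint ratio $(q_C^w(Y=y,Z=z_1) - \delta)/(q_C^w(Y=y,Z=z_2) + \delta)$ factors as (unperturbed joint ratio) $\times\, (1 - \delta/\hat b_{z_1})/(1 + \delta/\hat b_{z_2}) \geq$ (unperturbed joint ratio) $\times\, (C-4\delta)/(C+4\delta)$. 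Across the two cases the (unperturbed joint ratio) $\times$ (marginal ratio) combine to at least $\tau \cdot 1$ (when $z_1=0, z_2=1$: joint $\geq \tau$, marginal $\geq 1$) or $1 \cdot \tau$ (when $z_1=1, z_2=0$: joint $\geq 1$, marginal $\geq \tau$). Thus $p^\star$'s statistical rate is at least $\tau(C-4\delta)/(C+4\delta) = (\tau C - 4\tau\delta)/(C+4\delta)$, which is $\geq (\tau C - 4\delta)/(C+4\delta) = \tau - 4\delta(1+\tau)/(C+4\delta) = \tau'$ using $\tau \leq 1$.

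The main obstacle is that the statistical rate guarantee on $q_C^w$ given by Theorem~\ref{thm:reweighting} is relative to $q_C^w$'s own marginal on $\Omega_\ell$, which in general differs from $\theta_\ell = p^\star(Z=1)$; one therefore cannot transfer the statistical rate from $q_C^w$ to $p^\star$ as a black box. The workaround is to separate the statistical-rate ratio into a joint piece and a marginal piece and bound each independently: the joint piece is handled by the explicit form of $w$ from Algorithm~\ref{algo:reweighting} (avoiding conditional probabilities entirely), and the marginal piece is handled by directly reading off $\theta_\ell$ from the max-entropy constraint. A small technical check needed along the way is to verify that the hypotheses on $\theta_\ell$ ensure the relative interior condition under which strong duality (and hence the identification of $p^\star(Z=1) = \theta_\ell$) holds.
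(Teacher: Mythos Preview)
Your proposal is correct and follows essentially the same approach as the paper: both arguments split the conditional ratio into a joint-probability ratio (controlled via the $\delta$-perturbation and the lower bound $q_C^w(Y=y,Z=z)\geq C/4$, together with the joint ratio $q_C^w(Y=y,Z=0)/q_C^w(Y=y,Z=1)\in[\tau,1]$ coming from Algorithm~\ref{algo:reweighting}) and a marginal ratio on $\Omega_\ell$ (read off directly from the primal constraint $p^\star(Z=1)=\theta_\ell$ and the hypothesis $\theta_\ell\in[1/2,\,1/(1+\tau)]$). The only substantive difference is cosmetic: the paper bounds $(\tau q-\delta)/(q+\delta)$ additively as $\tau-\delta(1+\tau)/(q+\delta)$ and then plugs in $q\geq C/4$, whereas you factor $(q_{z_1}-\delta)/(q_{z_2}+\delta)$ multiplicatively as $(q_{z_1}/q_{z_2})\cdot(1-\delta/q_{z_1})/(1+\delta/q_{z_2})$ and bound the second factor by $(C-4\delta)/(C+4\delta)$; your intermediate bound $\tau(C-4\delta)/(C+4\delta)$ is in fact slightly tighter than $\tau'$ before you relax it. One small remark: the identification $p^\star(Z=1)=\theta_\ell$ follows directly from primal feasibility and does not require strong duality.
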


\noindent
The condition on $\theta$, when simplified, implies that
$\nicefrac{(1 - \theta_\ell)}{\theta_\ell} \geq \tau$ and $\nicefrac{\theta_\ell}{(1 - \theta_\ell)} \geq 1$, i.e., the marginal probability of $Z=0$ is atleast $\tau$ times the marginal probability of $Z=1$. 
This directly implies that the representation rate of $p^\star$ is at least $\tau$.
As we control the statistical rate using the prior $q_C^w$, the statistical rate of $p^\star$ depends on the distance between $q_C^w$ and $p^\star$.
The proof of Theorem~\ref{thm:statistical_rate_bound} is provided in  Section~\ref{sec:proof_stat_rate}.
 
\begin{remark} \label{rem:other_theta}
Two natural choices for $\theta$ that satisfy the conditions of Theorem~\ref{thm:statistical_rate_bound} are the following: %
\begin{enumerate}
\item  The reweighted vector 
$\theta^w := \sum_{\alpha \in \mathcal{S}} w(\alpha)\cdot \alpha$, where $w$ is the weight distribution obtained using Algorithm~\ref{algo:reweighting}; since $w$ has representation rate $\tau$, it can be seen that $\theta^w_\ell = 1/(1+\tau)$. 

\item The vector $\theta^b$ that is the mean of the dataset $\mathcal{S}$ for all non-protected attributes and class labels, and is balanced across the values of any protected attribute.
I.e., 
\begin{equation*}
\theta^{b} := \left(\sum_{\alpha \in \mathcal{S}} \frac{n_\alpha}{N} X_\alpha, \sum_{\alpha \in \mathcal{S}}\frac{n_\alpha}{N} Y_\alpha, \frac{1}{2}\right).
\end{equation*}
\end{enumerate}
\end{remark}

\section{Empirical analysis} \label{sec:main_experiments}
Our approach, as described above, is flexible and can be used for a variety of applications. 
\footnote{The code for our framework is available at \url{https://github.com/vijaykeswani/Fair-Max-Entropy-Distributions}.}
In this section we show its efficacy as compared with other state-of-the-art data debiasing approaches, in particular reweighting methods by \cite{kamiran2012data, king2001logistic} and an optimization method by \cite{calmon2017optimized}. 
We consider two applications and three different domain sizes: The \textbf{COMPAS} criminal defense dataset using two versions of the data with differently sized domains, and the \textbf{Adult} financial dataset. 
With regard to fairness, we compare the statistical rate and representation rate of the de-biased datasets as well as the statistical rate of a classifier trained on the de-biased data.
With regard to accuracy, we report both the divergence of the de-biased dataset from the raw data, as well as the resulting classifier accuracy.
We find that our methods perform at least as well as if not better than existing approaches across all fairness metrics; in particular, ours are the only approaches that can attain a good representation rate while, simultaneously, attaining good statistical rate both with regard to the data and the classifier.
Further, the loss as compared to the classifier accuracy when trained on raw data is minimal, even when the KL divergence between our distribution and the empirical distribution is large as compared to other methods.
Finally, we report the runtime of finding the de-biased distributions, and find that our method scales well even for large domains of size $\sim 10^{11}$.

\subsection{Setup for empirical analysis} \label{sec:setup}

\noindent
{\bf Datasets.}
We consider two benchmark datasets from the fairness in machine learning literature.\footnote{The details of both datasets, including a description of features are presented in Appendix~\ref{sec:experiments1} and \ref{sec:other_experiments}.} 

(a) The {\textbf{COMPAS} dataset \cite{compas,larson2016we}}   
contains information on criminal defendants at the time of trial (including criminal history, age, sex, and race), along with post-trail instances of recidivism (coded as any kind of re-arrest).
We use two versions of this dataset:
the \textbf{small} version has a domain of size $144$, and contains sex, race, age, priors count, and charge degree as features, and uses a binary marker of recidivism within two years as the label.
We separately consider race (preprocessed as binary with values ``Caucasian'' vs ``Not-Caucasian'') and gender (which is coded as binary) as  protected attributes.
The  \textbf{large} dataset has a domain of size approximately $1.4 \times 10^{11}$ and consists of 19 attributes, 6 different racial categories and additional features such as  the type of prior and juvenile prior counts.

(b) The {\textbf{Adult} dataset \cite{adult}} contains demographic information of individuals along with a binary label of whether their annual income is greater than \$50k, and has a domain of size $504$.
The demographic attributes include race, sex, age and years of education.
We take gender (which is coded as binary) as the protected attribute.

\noindent
{\bf Using our approach.}
We consider the prior distribution $q_C^{w}$, which assigns weights returned by Algorithm~\ref{algo:reweighting} for input $\mathcal{S}$ and $\tau=1$ and $C=0.5$.\footnote{This choice for $C$ is arbitrary; we evaluate performance as a function of $C$ in Appendix \ref{sec:experiments1}.}
Further, we consider the two different choices for the expectation vector as defined in Remark~\ref{rem:other_theta}, namely: 
(1) The weighted mean of the samples $\theta^{w}$ using the weights $w$
as obtained from Algorithm~\ref{algo:reweighting}, and 
(2) the empirical expectation vector with the marginal of the protected attribute modified to ensure equal representation of both groups $\theta^b$. In this case, since the protected attribute is binary we set $\theta^b_\ell = 1/2$.
\footnote{In Appendix~\ref{sec:experiments1} we evaluate the performance using alternate priors and expectation vectors such as  $q_C^{d}$ and $\theta_{\textrm{d}}$ which correspond to the raw data.
}

\noindent
{\bf Baselines and metrics. }
We compare against the raw data, simply taking the prior $q_C^{w}$ defined above, a reweighting method~\cite{kamiran2012data} for statistical parity, a reweighting method~\cite{king2001logistic} for representation parity, and an optimized preprocessing method \cite{calmon2017optimized}. 
We consider the distributions themselves in addition to classifiers trained on simulated datasets drawn from these distributions, and 
evaluate them with respect to well-studied metrics of fairness and accuracy. 

For fairness metrics, we report the statistical rate (see Definition~\ref{defn:statistical_rate}).
%
Note that this can be evaluated both with regard to the instantiation of the outcome variable in the simulated data, and with regard to the outcome predicted by the classifier; we report both.
We also report the representation rate (see Definition~\ref{defn:representation_rate}) of the simulated data; for gender this corresponds to the ratio between fraction of women and men in the simulated datasets, while for race this corresponds to the ratio between fraction of Caucasian and Non-Caucasian individuals in the simulated datasets.
For all fairness metrics, larger values, closer to 1, are considered to be ``more fair''.

We report the classifier accuracy when trained on the synthetic data. 
Further, we aim to capture the distance between the de-biased distribution and the distribution induced by the empirical samples. For the Adult dataset and small COMPAS dataset we report the KL-divergence.\footnote{For this to be well-defined, if a point does not appear in the dataset, before calculating KL-divergence, we assign it a very small non-zero probability ($\sim 10^{-7}$). } 
For the large COMPAS dataset, the KL-divergence is not appropriate as most of the domain is not represented in the data. 
We instead consider the covariance matrix of the output dataset  and the raw dataset and report the Frobenius norm of the difference of these matrices. 
In either case, lower values suggest the synthetic data better resembles the original dataset.
Lastly, we report the runtime (in seconds) of each approach.

\paragraph{Implementation details.}  
We perform 5-fold cross-validation for every dataset, i.e., we divide each dataset into five partitions. 
First, we select and combine four partitions into a training dataset and use this dataset to construct the distributions. 
Then we sample 10,000 elements from each distribution and train the classifier on this simulated dataset.
We then evaluate our metrics on this simulated dataset and classifier (where the classifier accuracy and statistical rate is measured over the test set, i.e., the fifth partition of the original dataset).
This sampling process is repeated 100 times for each distribution.
We repeat this process 5 times for each dataset, once for each fold. 
We report the mean across all (500) repetitions and folds.
Within each fold, the standard error across repetitions is low, less than 0.01 for all datasets and methods.
Hence, for each fold, we compute the mean of metrics across the 100 repetitions and then report the standard deviation of this quantity across folds.

We use a decision tree classifier with gini information criterion as the splitting rule. A Gaussian naive Bayes classifier gives similar results. Further details are presented in Appendix \ref{sec:experiments1}.
In the computation of the max-entropy distribution, we use a second-order algorithm inspired from works of \cite{ALOW17,CMTV17} that is also provably polynomial time in the parameters above and turns out to be slightly faster in practice. We present the details in Appendix~\ref{sec:full_algorithm}.
The machine specifications are a  1.8Ghz Intel Core i5 processor with 8GB memory.

\subsection{Empirical results}
The empirical results comparing our max-entropy approach against the state-of-the-art are reported in Table~\ref{tab:Clf_DI_Utlity_Compare} and graphically presented in Figure~\ref{fig:main_results}.
The performance of using just the prior $q_C^w$ is also reported in the table and the figure.
For all datasets, the statistical rate of max-entropy distributions is at least $0.97$, which is higher than that of the raw data and higher or comparable to other approaches, including those specifically designed to  optimize statistical parity \cite{calmon2017optimized, kamiran2012data}.
Additionally, the representation rate of max-entropy distributions is at least $0.97$, which is higher than that of the raw data and higher or similar to other approaches, including those specifically designed to optimize the representation rate \cite{king2001logistic}. 
Recall that both fairness metrics can be at most $1$; this suggests 
the synthetic data our distributions produce have a near-equal fraction of individuals from both groups of protected attribute values (women/men or Caucasian/Not-Caucasian) \emph{and} the probability of observing a favorable outcome is almost equally likely for individuals from both groups.

Note that  Theorem~\ref{thm:statistical_rate_bound} gives a bound on the statistical rate $\tau'$.
While this bound can be strong, the statistical rates we observe empirically are even better.
E.g., for the small COMPAS dataset with gender as the protected attribute, by plugging in the value of $\delta$ for prior $q_C^w$ and expected vector $\theta^w$, we get that $\tau' = 0.85$ (i.e., satisfying the 80\% rule), but we observe that empirically it is even higher (0.98).
However, the bound may not always be strong. 
E.g., or the Adult dataset, we only get $\tau' = 0.23$. 
In this case, the distance between the prior $q_C^w$ and max-entropy distribution $p^\star$ is large hence the bound on the statistical rate of $p^\star$, derived using $q_C^w$, is less accurate.
Still, the statistical rate of max-entropy distribution is observed to be $0.97$, suggesting that perhaps stronger fairness guarantees can be derived.

The statistical rate of  the classifiers trained on the synthetic data generated by our max-entropy approach is comparable or better than that from other methods, and significantly better than the statistical rate of the classifier trained on the raw data.
Hence, as desired, our approach leads to improved fairness in downstream applications.
This is despite the fact that the KL-divergence of the max-entropy distributions from the empirical distribution on the dataset is high compared to most other approaches. 
Still, we note that the difference between the max-entropy distributions and the empirical distribution tends to be smaller than the difference between the prior $q_C^w$ and the empirical distribution (as measured by KL divergence and the covariance matrix difference as discussed above). 
This suggests that, as expected, the max-entropy optimization helps push the re-weighted distribution towards the empirical distribution and
highlights the benefit of using a hybrid approach of reweighting and optimization.

For the COMPAS datasets, the raw data has the highest accuracy and the average loss in accuracy when using the datasets generated from max-entropy distributions is at most 0.03. 
This is comparable to the loss in accuracy when using datasets from other baseline algorithms.
In fact, for the small version of COMPAS dataset, the accuracy of the classifier trained on datasets from the max-entropy distribution using marginal $\theta^b$ is statistically similar to the accuracy of the classifier trained on the raw dataset.
For the Adult dataset, \cite{king2001logistic} achieves the same classifier accuracy as the raw dataset.
As the Adult dataset is relatively more gender-balanced than COMPAS datasets and outcomes are not considered, \cite{king2001logistic} do not need to modify the dataset significantly to achieve a high representation rate (indeed its KL-divergence from the empirical distribution of the raw data is the smallest).
In comparison, all other methods that aim to satisfy statistical parity (max-entropy approach, \cite{calmon2017optimized, kamiran2012data}) suffer a similar (but minimal) loss in accuracy of at most $0.03$.

With respect to runtime,
since \cite{kamiran2012data}, \cite{king2001logistic} and prior $q_C^w$ are simple re-weighting approaches and do not look at features other than class labels and protected attribute, it is not surprising that they have the best processing time.
Amongst the generative models, the max-entropy optimization using our algorithm is significantly faster than the optimization framework of \cite{calmon2017optimized}.
In fact, the algorithm of \cite{calmon2017optimized} is infeasible for larger domains, such as the large COMPAS dataset, and hence we are not able present the results of their algorithm on that dataset.

\begin{table*}[!hbtp]
\scriptsize
\caption{
\textbf{Empirical results.} Our max-entropy distributions use prior $q_C^{w}$ for $C = 0.5$ and expected value $\theta^{w} $  or $\theta^{b}$ (as defined in Remark~\ref{rem:other_theta}).
``SR'' denotes statistical rate, ``RR'' denotes representation rate, and ``Clf'' denotes classifier.
We report the mean across all folds and repetitions, with the standard deviation across folds in parentheses.
For each measurement and dataset, the results that are not statistically distinguishable at $\textrm{p-value} = 0.05$ from the best result across all baselines and approaches are given in bold.
Note that the approach is infeasible for larger domains, such as the large version of COMPAS datasets, and hence we do not present the results of \cite{calmon2017optimized} on that dataset.
{The results in this table are graphically presented in Figure~\ref{fig:main_results}.}
}
\vspace{0.1in}
\centering
\begin{tabular}{ p{0.1cm}p{0.1cm}p{0.1cm}p{1.6cm} p{1.5cm}  p{1.5cm} p{1.5cm} p{1.5cm} p{1.5cm} p{1.5cm} p{1.5cm} }
 \toprule
 \multicolumn{5}{c}{ }   &  \multicolumn{3}{c}{This paper} &  \multicolumn{3}{c}{Baselines} \\
\cmidrule{6-11}
 \multicolumn{4}{c}{}  &  Raw Data &  Prior $q_C^{w}$ & Max-Entropy with $q_C^{w}$, $\theta^{w} $ &  Max-Entropy with $q_C^{w}$, $\theta^{b} $ & \cite{calmon2017optimized}  &  \cite{kamiran2012data} &  \cite{king2001logistic} \\
\midrule 
\parbox[t]{4mm}{\multirow{5}{*}[-0.1in]{\rotatebox[origin=c]{90}{Adult }}}& \parbox[t]{4mm}{\multirow{5}{*}[-0.1in]{\rotatebox[origin=c]{90}{gender}}}& \parbox[t]{2mm}{\multirow{3}{*}[-0.02in]{\rotatebox[origin=c]{90}{\tiny{Fairness}}}}& Data SR& 0.36 (0) & \textbf{0.97} (0.02) & \textbf{0.98} (0.02) & \textbf{0.98} (0.02) & 0.96 (0.01) & \textbf{0.97} (0.02)  & 0.36 (0)\\ 
& & & Data RR  & 0.49 (0) &\textbf{0.97} (0.01) & \textbf{0.97} (0.02) & \textbf{0.99} (0.01) & 0.49 (0.01) & 0.49 (0.01) & \textbf{0.98} (0) \\ 
& & & Clf SR& 0.36 (0) & \textbf{0.96} (0.03) & \textbf{0.95} (0.02) & \textbf{0.96} (0.01) & \textbf{0.97} (0.01) &  0.85 (0.03)   & 0.36 (0) \\ 
\cmidrule{3-11}
& & \parbox[t]{2mm}{\multirow{2}{*}[-0.01in]{\rotatebox[origin=c]{90}{\tiny{Accuracy}}}}& KL-div w.r.t raw data & \textbf{0} (0) & 1.23 (0.03) & 0.24 (0.01) & 0.24 (0.01) & 0.16 (0) & 0.22 (0.01)   & 0.08 (0) \\ 
&& & Clf Acc & \textbf{0.80} (0) & {0.75} (0.01) & {0.77} (0.02) & 0.76 (0.01) & {0.77} (0.01) & {0.78} (0.01)   & \textbf{0.80} (0) \\ 
\cmidrule{3-11}
&& & Runtime & - & 0.73s & 10s & 10s & 62s  & 0.16s  & 0.57s \\ 
\midrule 
\parbox[t]{4mm}{\multirow{10}{*}[-0.40in]{\rotatebox[origin=c]{90}{COMPAS (small)}}}& \parbox[t]{4mm}{\multirow{5}{*}[-0.1in]{\rotatebox[origin=c]{90}{gender}}}&  \parbox[t]{2mm}{\multirow{3}{*}[-0.01in]{\rotatebox[origin=c]{90}{\tiny{Fairness}}}}& Data SR& 0.73 (0.02) & \textbf{0.98} (0.01) & \textbf{0.98} (0.02) & \textbf{0.99} (0.01) & 0.87 (0.02) & \textbf{0.98} (0.02)  & 0.73 (0.03) \\ 
& & & Data RR & 0.24 (0.01) & \textbf{0.97} (0.02) & \textbf{0.98} (0.01) & \textbf{0.98} (0.02) & 0.24 (0.01) & 0.24 (0.01)  & \textbf{0.98} (0) \\ 
& & & Clf SR& 0.72 (0.01) & \textbf{0.96} (0.02) & \textbf{0.95} (0.02) & \textbf{0.96} (0.02) & \textbf{0.93} (0.04) & \textbf{0.93} (0.03)  & 0.72 (0.01) \\ 
\cmidrule{3-11}
& & \parbox[t]{2mm}{\multirow{2}{*}[-0.01in]{\rotatebox[origin=c]{90}{\tiny{Accuracy}}}}& KL-div w.r.t raw data & \textbf{0} (0) & 0.57 (0.03) & 0.35 (0.01) & 0.37 (0.02) & {0.02} (0) & 0.14 (0.02)  & 0.24 (0)\\ 
&& & Clf Acc & \textbf{0.66} (0.01) & \textbf{0.65 }(0.01) & 0.64 (0.01) & \textbf{0.65} (0.02) & \textbf{0.66} (0.01) & \textbf{0.66} (0.01)  & \textbf{0.66} (0.01)\\ 
\cmidrule{3-11}
& & & Runtime & - & 0.06s & 2.5s & 2.6s & 25s  & 0.04s  & 0.10s \\ 
\cmidrule{2-11}
& \parbox[t]{4mm}{\multirow{5}{*}[-0.1in]{\rotatebox[origin=c]{90}{race}}}&  \parbox[t]{2mm}{\multirow{3}{*}[-0.01in]{\rotatebox[origin=c]{90}{\tiny{Fairness}}}}& Data SR& 0.76 (0.01) & \textbf{0.98} (0.01) & \textbf{0.98} (0.01) & \textbf{0.99} (0.01) & 0.93 (0.01) & \textbf{0.98} (0.01)  & 0.76 (0.01) \\ 
& & & Data RR & 0.66 (0.01) & \textbf{0.99} (0.01) & \textbf{0.99} (0.01) & \textbf{0.99} (0.01) & 0.74 (0.02) & 0.67 (0.02)  & \textbf{0.99} (0) \\ 
& & & Clf SR& 0.75 (0.02) & \textbf{0.95} (0.03) & \textbf{0.96} (0.01) & \textbf{0.94} (0.03) & {0.85} (0.09) & \textbf{0.96} (0.03)  & 0.75 (0.02) \\ 
\cmidrule{3-11}
& & \parbox[t]{2mm}{\multirow{2}{*}[-0.01in]{\rotatebox[origin=c]{90}{\tiny{Accuracy}}}}& KL-div w.r.t raw data & \textbf{0} (0) & 0.36 (0.02) & 0.13 (0.01) & {0.13} (0.01) & 0.02 (0.01) & 0.02 (0)  & 0.03 (0)\\ 
& & & Clf Acc & \textbf{0.66} (0.01) & \textbf{0.64} (0.02) & \textbf{0.65} (0.02) & \textbf{0.65} (0.01) & 0.58 (0.02) & \textbf{0.65} (0.01)  & \textbf{0.66} (0.01)\\ 
\cmidrule{3-11}
& & & Runtime & - & 0.06s & 2.5s & 2.6s & 25s  & 0.04s  & 0.10s \\ 
\midrule
\parbox[t]{4mm}{\multirow{10}{*}[-0.40in]{\rotatebox[origin=c]{90}{COMPAS (large)}}}& \parbox[t]{4mm}{\multirow{5}{*}[-0.1in]{\rotatebox[origin=c]{90}{gender}}}&  \parbox[t]{2mm}{\multirow{3}{*}[-0.03in]{\rotatebox[origin=c]{90}{\tiny{Fairness}}}}& Data SR& 0.71 (0.02) & 0.97 (0.01) & \textbf{0.98} (0.01) & \textbf{0.97} (0.02) & - & \textbf{0.99} (0.01)  & 0.71 (0.02) \\ 
& & & Data RR  & 0.26 (0.01) & 0.96 (0.01) & \textbf{0.98} (0.01) & \textbf{0.98} (0.01) & - & 0.26 (0.01)  &  \textbf{0.98} (0) \\ 
& & & Clf SR& 0.73 (0.06) & \textbf{0.89} (0.02) & \textbf{0.88} (0.02) & \textbf{0.85} (0.06) & - & 0.79 (0.01)  & 0.73 (0.03) \\ 
\cmidrule{3-11}
& & \parbox[t]{2mm}{\multirow{2}{*}[-0.05in]{\rotatebox[origin=c]{90}{\tiny{Accuracy}}}}& Covariance matrix diff norm & \textbf{0} (0) & 4.64 (0.26) & 3.20 (0.44)  & 5.18 (0.84)  & - & 4.89 (0.04)  & 0.16 (0.01) \\ 
& & & Clf Acc & \textbf{0.65} (0.01) & 0.63 (0.01) & 0.63 (0.01)  & {0.63} (0.01) &  - & 0.62 (0.02)  & 0.63 (0.01) \\ 
\cmidrule{3-11}
& & & Runtime & - & 35s & 40s & 40s &  -   & 0.25s  & 2s  \\ 
\cmidrule{2-11}
& \parbox[t]{4mm}{\multirow{5}{*}[-0.1in]{\rotatebox[origin=c]{90}{race}}}&  \parbox[t]{2mm}{\multirow{3}{*}[-0.03in]{\rotatebox[origin=c]{90}{\tiny{Fairness}}}}& Data SR& 0.73 (0.03) & \textbf{0.98} (0.02) & \textbf{0.98} (0.02) & \textbf{0.97} (0.02) & - & \textbf{0.99} (0)  & 0.72 (0.03) \\ 
& & & Data RR & 0.06 (0) & \textbf{0.99} (0.01) & \textbf{0.99} (0.01) & \textbf{0.99} (0.01) & - & 0.01 (0.01)  &  \textbf{0.98} (0) \\ 
& & & Clf SR& 0.72 (0.01) & \textbf{0.89} (0.06) & \textbf{0.91} (0.06) & \textbf{0.91} (0.05) & - & \textbf{0.85} (0.11)  & 0.71 (0.13) \\ 
\cmidrule{3-11}
& & \parbox[t]{2mm}{\multirow{2}{*}[-0.05in]{\rotatebox[origin=c]{90}{\tiny{Accuracy}}}}& Covariance matrix diff norm & \textbf{0.01} (0) & 1.94 (0.25) & 1.93 (0.24)  & 1.87 (0.26)  & - & 0.88 (0.14)  & 0.36 (0.01) \\ 
&& & Clf Acc & \textbf{0.66} (0.01) & 0.64 (0.01) & 0.64 (0.01)  & {0.63} (0.01) &  - & 0.41 (0.08)  & 0.64 (0.01) \\ 
\cmidrule{3-11}
& & & Runtime & - & 35s & 40s & 40s &  -   & 0.25s  & 2s  \\ 
\bottomrule
\end{tabular}
\label{tab:Clf_DI_Utlity_Compare}
\end{table*}

\section{Conclusion, limitations, and future work}
We present a novel optimization framework that can be used as a data preprocessing method towards mitigating bias.
It works by applying the maximum entropy framework to modified inputs (i.e., the expected vector and prior distribution) which are carefully designed to improve certain fairness metrics.
Using this approach we can learn distributions over large domains, controllably adjust the representation rate or statistical rate of protected groups, yet remains close to the empirical distribution induced by the given dataset.
Further, we show that we can compute the modified distribution in time polynomial in the {\em dimension} of the data.
Empirically, we observe that samples from the learned distribution have desired representation rates and statistical rates, and when used for training a classifier incurs only a slight loss in accuracy while significantly improving its fairness.

Importantly, our pre-processing approach is also useful in settings where group information is not present at runtime or is legally prohibited from being used in classification \cite{edwards16slave}, and hence we only have access to protected group status it in the training set. 
Further, our method has an added privacy advantage of obscuring information about individuals in the original dataset, since the result of our algorithm is a distribution over the domain rather than a reweighting of the actual dataset.

An important extension would be to modify our approach to improve fairness metrics across intersectional types. 
Given multiple protected attributes, one could pool them together to form a larger categorical protected attribute that captures intersectional groups, allowing 
our approach to be used directly. 
However, improving fairness metrics across multiple protected attributes \emph{independently} seems to require additional ideas.
Achieving ``fairness'' in general is an imprecise and context-specific goal. 
The choice of fairness metric depends on the application, data, and impact on the stakeholders of the decisions made, and is beyond the scope of this work. %
However, our approach is not specific to statistical rate or representation rate and can be extended to other fairness metrics by appropriately selecting the prior distribution and expectation vector for our max-entropy framework. 

\begin{figure*}[!htbp]
\centering
\includegraphics[width=\linewidth]{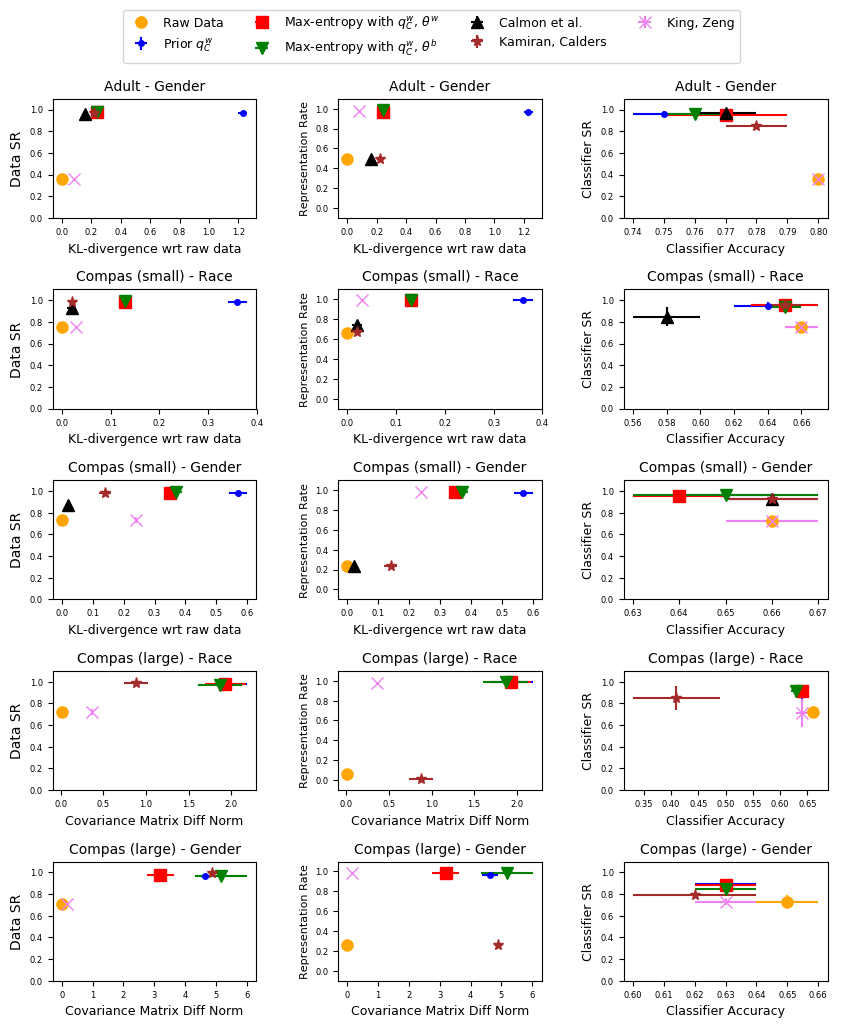}
\caption{\footnotesize The figures represent the fairness (measured using data SR or classifier SR or representation rate) vs accuracy (measured using KL-divergence or covariance matrix difference norm or classifier accuracy) tradeoff for our method and baselines.
``SR'' denotes statistical rate.
For all metrics, we plot the mean across all folds and repetitions, with the standard deviation as error bars.
Note that the approach of \cite{calmon2017optimized} is infeasible for larger domains, such as the large version of COMPAS datasets, and hence we do not present their results on that dataset.
}
\label{fig:main_results}
\end{figure*}

\section{Proofs}

\subsection{Proof of Theorem~\ref{thm:reweighting}} \label{sec:reweighting_appendix}
In this section, we present the proof of the earlier stated properties of the prior distribution and the reweighting algorithm.
Recall that the prior distribution we construct has the following form. For $C \in [0,1]$,
\begin{equation}
 q_C^w(\alpha)= C\cdot u(\alpha) + (1-C) \cdot w(\alpha).
\end{equation}
Here $u$ is the uniform distribution over $\Omega$.
The weight distribution $w$ is obtained using Algorithm \ref{algo:reweighting} to satisfy certain statistical rate constraints. 
To prove Theorem~\ref{thm:reweighting}, we will consider the uniform and weighted part of the $q_C^w$ separately and show that the convex combination of two distributions satisfies similar fairness properties as the two distributions. 
We start with the statements and proofs of bounds for the uniform distribution.
\begin{lemma} \label{lem:prior_fairness_1}
Let $u : \Omega \rightarrow [0,1]$ be the uniform distribution on $\Omega$. 
Then $u$ satisfies the following properties.
\begin{enumerate}
\item For a fixed $y \in \mathcal{Y}$,
$u(Y = y, Z = 0) = u(Y = y, Z = 1).$
\item $u(Z = 0) = u(Z = 1).$
\item For a fixed $y \in \mathcal{Y}$,
$ u(Y = y \mid Z = 0) = u( Y = y \mid Z = 1).$
\end{enumerate}
\end{lemma}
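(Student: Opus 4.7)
The plan is to exploit the product structure of the uniform distribution on $\Omega = \{0,1\}^d$. Since $u(\alpha) = 1/2^d$ for every $\alpha$, we can view $u$ as the product of $d$ independent uniform distributions on $\{0,1\}$, one per coordinate. In particular, any disjoint subsets of coordinates are independent under $u$, and each single coordinate has a Bernoulli$(1/2)$ marginal. All three claims will follow immediately from this observation, which makes the lemma essentially a counting exercise.

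First I would handle claim 2. Since the protected attribute $\ell$ has a uniform marginal under $u$, and $\Omega_\ell = \{0,1\}$, we simply have $u(Z=0) = u(Z=1) = 1/2$; concretely, the number of $\alpha \in \Omega$ with $\alpha_\ell = 0$ equals $2^{d-1}$, and similarly for $\alpha_\ell = 1$. Next, for claim 1, I would use independence of the coordinate blocks $I_y$ and $\{\ell\} \subseteq I_z$ under $u$: the event $\{Y = y\}$ constrains only the $|I_y|$ coordinates in $I_y$, while $\{Z = z\}$ constrains only coordinate $\ell$. Thus
\[
u(Y = y,\, Z = z) \;=\; u(Y = y)\cdot u(Z = z) \;=\; \tfrac{1}{2}\,u(Y=y),
\]
which does not depend on $z \in \{0,1\}$, giving $u(Y=y, Z=0) = u(Y=y, Z=1)$.

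Finally, claim 3 follows directly by combining claims 1 and 2: by definition of conditional probability,
\[
u(Y = y \mid Z = z) \;=\; \frac{u(Y = y,\, Z = z)}{u(Z = z)} \;=\; \frac{\tfrac{1}{2}\,u(Y=y)}{\tfrac{1}{2}} \;=\; u(Y=y),
\]
so in particular $u(Y=y \mid Z=0) = u(Y=y \mid Z=1)$.

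There is no real obstacle here; the only thing to be careful about is the notational setup, namely that $Y$ denotes the marginal of $\alpha$ on the index set $I_y$, $Z$ denotes the marginal on the single protected coordinate $\ell \in I_z$, and these are disjoint blocks of coordinates so that independence under the product uniform measure applies. Once that is spelled out, each of the three statements is a one-line consequence.
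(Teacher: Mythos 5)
Your proof is correct and takes essentially the same approach as the paper: both exploit the product structure of the uniform distribution on $\{0,1\}^d$. The paper phrases it as a direct count, $u(Y=y,Z=z) = \frac{1}{|\Omega|}\cdot\frac{|\Omega|}{2|\mathcal{Y}|}$, and then derives parts 2 and 3 from part 1, whereas you invoke independence of the disjoint coordinate blocks $I_y$ and $\{\ell\}$ explicitly and establish part 2 first; these are the same underlying observation presented in a slightly different order.
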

\begin{proof}
(1) For any $\alpha \in \Omega$, let $y(\alpha)$ denote the class label of element $\alpha$ and let $z(\alpha)$ denote the sensitive attribute value of element $\alpha$.
\begin{align*}
u(Y = y, Z=z) =  \sum_{\alpha \in \Omega \; \mid y(\alpha) = y, z(\alpha) = z} \frac{1}{|\Omega|}= \frac{1}{|\Omega|} \cdot \frac{|\Omega|}{2|\mathcal{Y}|} = \frac{1}{2|\mathcal{Y}|}.
\end{align*}
Since the above term is independent of $z$-value, $u(Y = y, Z = z)$ is equal for all $z$.

\noindent
(2) Using 
\[ u(Z = z_1) = \sum_{y \in \mathcal{Y}} u(Z = z_1, Y = y).\]
and part (1), we get
\[  \sum_{y \in \mathcal{Y}} u(Z = z_1, Y = y) = \sum_{y \in \mathcal{Y}} u(Z = z_2, Y = y).\]
This implies that
\[ u(Z = z_1) = u(Z = z_2).\]
(3) Taking the ratio of part (1) and (2), we get
\begin{align*}
u(Y = y \mid Z = z_1) = \frac{u(Y = y, Z = z_1)}{u(Z = z_1)}=  \frac{u(Y = y, Z = z_2)}{u(Z = z_2)}= u(Y = y \mid Z = z_2).
\end{align*}
\end{proof}
\noindent
As expected, the uniform distribution is perfectly fair. We next try to prove similar bounds for the weighted distribution $w$.
\begin{lemma} \label{lem:prior_fairness_2}
Given dataset $\mathcal{S}$ and parameter $\tau \in [0,1]$, let $w$ be the weighted distribution on samples in $\mathcal{S}$  obtained from Algorithm~\ref{algo:reweighting} with input $\mathcal{S}$ and $\tau$.
Then $w$ satisfies the following properties.
\begin{enumerate}
\item For a fixed $y \in \mathcal{Y}$,
$w(Y = y, Z = 0) = \tau \cdot w(Y = y, Z = 1). $
\item $w(Z = 0) = \tau \cdot   w(Z = 1).$
\item For a fixed $y \in \mathcal{Y}$,
$w(Y = y \mid Z = 0) = w( Y = y \mid Z = 1).$
\end{enumerate}
\end{lemma}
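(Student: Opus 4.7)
The plan is to prove all three claims by direct computation from the definition of $w$ in Algorithm~\ref{algo:reweighting}, with claim (1) being the main content and claims (2)--(3) following by summation and division. The only subtlety is the normalization step $w(\alpha) \leftarrow w(\alpha)/W$, but since this is a common multiplicative factor on all $w(\alpha)$, it does not affect the ratios asserted in (1)--(3); I will therefore carry out the arithmetic with the unnormalized weights and observe that the normalization cancels.

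For claim (1), I will fix $y \in \mathcal{Y}$ and $z \in \{0,1\}$ and sum the unnormalized weights over all $\alpha \in \mathcal{S}$ with $Y_\alpha = y$ and $Z_\alpha = z$. By the definition of the algorithm, each such $\alpha$ has $w(\alpha) = n_\alpha \cdot c(y)/c(y,z)$, so
\begin{equation*}
\sum_{\alpha : Y_\alpha = y,\, Z_\alpha = z} w(\alpha) \;=\; \frac{c(y)}{c(y,z)} \sum_{\alpha : Y_\alpha = y,\, Z_\alpha = z} n_\alpha.
\end{equation*}
For $z=1$, the inner sum is exactly $c(y,1)$, yielding $c(y)$; for $z=0$, the inner sum equals $\tau \cdot c(y,0)$ by the factor-of-$1/\tau$ built into $c(y,0)$, yielding $\tau \cdot c(y)$. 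After dividing both quantities by the normalizer $W$, I get $w(Y=y,Z=0) = \tau \cdot w(Y=y,Z=1)$, which is claim (1).

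For claim (2), I simply sum claim (1) over all $y \in \mathcal{Y}$, using $w(Z=z) = \sum_{y \in \mathcal{Y}} w(Y=y, Z=z)$. For claim (3), I take the ratio of claim (1) and claim (2): since both the numerator and denominator acquire the same factor of $\tau$ when moving from $Z=1$ to $Z=0$, the factors cancel and the conditional probabilities agree.

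There is essentially no obstacle here beyond keeping the bookkeeping of the $1/\tau$ factor straight; the entire proof is a one-line algebraic manipulation of the formulas defining $c(y)$, $c(y,0)$, $c(y,1)$, and $w(\alpha)$, and the result is independent of the particular structure of $\mathcal{Y}$ or $\mathcal{S}$ beyond the assumption (implicit in Algorithm~\ref{algo:reweighting}) that $c(y,0)$ and $c(y,1)$ are nonzero whenever we divide by them, i.e., both groups have at least one representative at each label $y$ for which we need a meaningful weight.
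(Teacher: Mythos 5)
Your proof is correct and follows essentially the same route as the paper's: sum the weights over each $(y,z)$ cell to get $w(Y=y,Z=1)=c(y)/W$ and $w(Y=y,Z=0)=\tau\, c(y)/W$, then sum over $y$ for claim (2), then divide for claim (3). If anything your bookkeeping is slightly cleaner than the paper's (you consistently keep $c(y,0)$ as the $1/\tau$-scaled quantity from the algorithm and note $\sum n_\alpha = \tau\,c(y,0)$, whereas the paper's displayed computation briefly conflates $c(y,0)$ with the raw count before arriving at the same $\tau\,c(y)/W$), and your closing remark about needing $c(y,0),c(y,1)>0$ is a good explicit acknowledgment of an assumption the paper leaves implicit.
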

\begin{proof}
Note that, by definition, the support of $w$ is the elements in the dataset $\mathcal{S}$. For any $\alpha \in \Omega$, let $y(\alpha)$ denote the class label of element $\alpha$ and let $z(\alpha)$ denote the sensitive attribute value of element $\alpha$.

(1) For any value $z \in \set{0,1}$,
\begin{align*}
w(Z = z, Y = y) =  \sum_{\alpha \in S \; \mid y(\alpha) = y, z(\alpha) = z} w(\alpha) 
\end{align*}
We will analyze the elements with sensitive attribute value 0 and 1 separately since they have different weights. From Algorithm~\ref{algo:reweighting},
\begin{align*}
w( Y = y,  Z = 1) = \sum_{\alpha \in S \; \mid y(\alpha) = y, z(\alpha) = 1} w(\alpha) &=  \sum_{\alpha \in S \; \mid y(\alpha) = y, z(\alpha) = 1}\frac{1}{W} \sum_{i=1}^N \mathbbm{1}(\alpha_i = \alpha) \cdot \frac{c(y)}{c(y,1)}\\
 &=  \frac{1}{W} \cdot c(y,1)  \cdot \frac{c(y)}{c(y,1)} = \frac{ c(y)}{W}.
\end{align*}
Similarly, for elements with sensitive attribute value 0, 
\begin{align*}
w(  Y = y,  Z = 0) =  \sum_{\alpha \in S \; \mid y(\alpha) = y, z(\alpha) = 0} w(\alpha) &=  \sum_{\alpha \in S \; \mid y(\alpha) = y, z(\alpha) = 0}\frac{1}{W} \sum_{i=1}^N \mathbbm{1}(\alpha_i = \alpha) \cdot \frac{\tau \cdot c(y)}{c(y,0)}\\
 &=  \frac{1}{W} \cdot c(y,0)  \cdot \frac{c(y)}{c(y,0)} = \frac{\tau \cdot c(y)}{W}.
\end{align*}
Therefore,
\[\frac{w(Y = y, Z = 0)}{w(Y = y, Z = 1)} = \tau \text{ and } \frac{w(Y = y, Z = 1)}{w(Y = y, Z = 0)} = \frac{1}{\tau} \geq 1. \]
Hence, the ratio for $z_1, z_2$ is atleast $\tau$.

(2) The statement of part (1) holds for all $y \in \mathcal{Y}$. Therefore, 
\[  \sum_{y \in \mathcal{Y}} w(Z = z_1, Y = y) \geq  \tau \cdot \sum_{y \in \mathcal{Y}} w(Z = z_2, Y = y).\]
This implies that
\[ w(Z = z_1) \geq \tau \cdot   w(Z = z_2).\]
Since the probability mass assigned to all sensitive attribute values are within a $\tau$-factor of each other, the representation rate of $w$ is atleast $\tau$.
In particular, using the exact inequalities in the proof of part (1), we get 
\[ \sum_{y \in \mathcal{Y}} w(Z = 0, Y = y) = \tau \cdot \sum_{y \in \mathcal{Y}} w(Z = 1, Y = y)\]
which implies that
\[ w(Z = 0) = \tau \cdot   w(Z = 1).\]
(3) Taking the ratio of part (1) and (2), we get
\[w(Y = y \mid Z = 0) = \frac{w(Y = y, Z = 0)}{w(Z = 0)} =   w(Y = y \mid Z = 1).\]
\end{proof}

\noindent
Before using the above properties of uniform and weighted distribution to prove Theorem~\ref{thm:reweighting}, we will show that the convex combination of two distributions has similar fairness guarantees as the two distributions.

\begin{lemma}[\bf Statistical rate of convex combination of two distributions] \label{lem:prior_stat_combined}
Given distributions $v_1, v_2$ on domain $\Omega$ and a parameter $C \in [0,1]$,  define distribution $q$ as
\[q(\alpha) := C \cdot v_1(\alpha) + (1-C) \cdot v_2(\alpha).\]
For parameters for $0 < \tau_2 \leq \tau_1 \leq 1$, suppose that $v_1, v_2$ satisfy the following properties:
\begin{enumerate}
\item $v_1( Z = 0) = \tau_1 \cdot v_1(Z = 1) \text{ and } ,$
$v_2( Z = 0) = \tau_2 \cdot v_2(Z = 1).$
\item For a fixed $y \in\mathcal{Y}$,
\[v_1(Y=y, Z = 0) = \tau_1 \cdot v_1(Y=y, Z = 1) \text{ and },\]
\[v_2( Y=y, Z = 0) = \tau_2 \cdot v_2(Y=y, Z = 1).\]
\end{enumerate}
Then for a fixed $y \in \mathcal{Y}$ and $z_1, z_2 \in \set{0,1}$, $q$ satisfies the following properties
\begin{enumerate}
\item $q(Y = y \mid Z = z_1) \geq  \tau_1\tau_2\cdot q(Y = y \mid Z = z_2).$
\item \[\frac{q(Y = y, Z = 0)}{q(Y = y, Z = 1)} \geq \tau_2 \text{ and } \frac{q(Y = y, Z = 1)}{q(Y = y, Z = 0)} \geq 1.\]
\end{enumerate}
\end{lemma}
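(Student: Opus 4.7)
The plan is to prove both conclusions by a direct computation using the decomposition $q = Cv_1 + (1-C)v_2$ together with the two scaling hypotheses, with conclusion 2 serving as a stepping stone for conclusion 1.

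For conclusion 2, I would expand
\[
q(Y=y, Z=0) \;=\; C\, v_1(Y=y, Z=0) + (1-C)\, v_2(Y=y, Z=0),
\]
and apply the joint-probability hypothesis to rewrite this as $C \tau_1 v_1(Y=y, Z=1) + (1-C) \tau_2 v_2(Y=y, Z=1)$. Comparing term by term against $q(Y=y, Z=1) = C v_1(Y=y,Z=1) + (1-C) v_2(Y=y,Z=1)$, and noting that each of the coefficients $\tau_1, \tau_2$ lies in the interval $[\tau_2, 1]$, one immediately reads off both $q(Y=y,Z=0) \geq \tau_2 \cdot q(Y=y,Z=1)$ and $q(Y=y,Z=0) \leq q(Y=y,Z=1)$. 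Both inequalities of conclusion 2 follow.

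For conclusion 1, I would first apply the same scaling argument to the $Z$-marginals, using the hypothesis $v_i(Z=0) = \tau_i v_i(Z=1)$ (equivalently, summing the previous identity over $y \in \mathcal{Y}$). This yields $q(Z=0) = C\tau_1 v_1(Z=1) + (1-C)\tau_2 v_2(Z=1)$, which expresses $q(Z=0)/q(Z=1)$ as a convex combination of $\tau_1$ and $\tau_2$; hence it lies in $[\tau_2, \tau_1]$. I then write the conditional ratio as
\[
\frac{q(Y=y \mid Z=z_1)}{q(Y=y \mid Z=z_2)} \;=\; \frac{q(Y=y, Z=z_1)}{q(Y=y, Z=z_2)} \cdot \frac{q(Z=z_2)}{q(Z=z_1)},
\]
and bound each factor using conclusion 2 together with the marginal bound just established. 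For the direction $(z_1, z_2)=(0,1)$ the first factor is at least $\tau_2$ and the second is at least $1/\tau_1$, giving a lower bound of $\tau_2/\tau_1$; for $(z_1,z_2)=(1,0)$ the first factor is at least $1$ and the second is at least $\tau_2$, giving a lower bound of $\tau_2$. Since $\tau_1 \leq 1$, both $\tau_2/\tau_1$ and $\tau_2$ are at least $\tau_1\tau_2$, proving conclusion 1.

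The argument is essentially bookkeeping; the only thing that could go wrong is miscounting which $\tau_i$ appears in which direction, since the hypotheses are not symmetric in $\tau_1$ and $\tau_2$. There is no deep obstacle, and in fact the same computation would give the tighter statistical rate $\tau_2$ (instead of $\tau_1 \tau_2$) if desired.
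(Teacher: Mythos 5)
Your proof is correct and follows essentially the same route as the paper's: expand the joint probabilities and marginals as convex combinations of the $v_i$ terms, establish conclusion 2 directly, then obtain conclusion 1 by factoring $q(Y=y\mid Z=z_1)/q(Y=y\mid Z=z_2)$ into the joint ratio times the reciprocal marginal ratio and bounding each factor. Your observation that the lemma actually holds with the sharper constant $\tau_2$ is also right: the paper's proof bounds $q(Z=0)\le \tfrac{1}{\tau_1}\,q(Z=1)$ in the case $(z_1,z_2)=(0,1)$, whereas the convex-combination expression $q(Z=0)/q(Z=1)\in[\tau_2,\tau_1]$ gives the tighter $q(Z=0)\le \tau_1\,q(Z=1)$, yielding $\tau_2/\tau_1$ instead of $\tau_1\tau_2$ in that direction, and hence a statistical rate of $\tau_2$ overall. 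The weaker stated bound $\tau_1\tau_2$ is harmless for the paper's application since it is invoked with $\tau_1=1$.
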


\begin{proof}
From the definition of $q$,
\[q( Z = 0)  = C \cdot v_1(Z = 0) + (1-C) \cdot  v_2(Z = 0).\]
Using the first property of $v_1$ and $v_2$, we get
\begin{align*}
q( Z = 0) &=  C \cdot \tau_1 \cdot v_1(Z = 1) + (1-C) \cdot \tau_2 \cdot  v_2(Z = 1) \\
& = \tau_2 \cdot (C \cdot v_1(Z = 1) + (1-C) \cdot  v_2(Z = 1)) \\ &+ C \cdot (\tau_1-\tau_2) \cdot  v_1(Z = 1) \\
& = \tau_2 \cdot q(Z = 1) + C \cdot (\tau_1-\tau_2) \cdot  v_1(Z = 1) \\
&\geq \tau_2 \cdot q(Z = 1).
\end{align*}
The last inequality holds because $\tau_2 \leq \tau_1$.
Similarly, since $\tau \in (0, 1]$,
\begin{align*}
q( Z = 1) &=  C \frac{1}{\tau_1}\cdot v_1(Z = 0) + (1-C) \cdot \frac{1}{\tau_2} \cdot  v_2(Z = 0)  \\
& \geq \frac{1}{\tau_1} \cdot q(Z = 0) + (1-C) \cdot (\frac{1}{\tau_2} - \frac{1}{\tau_1}) \cdot  v_1(Z = 0) \\
&\geq \frac{1}{\tau_1} \cdot q(Z = 0).
\end{align*}
In other words, the representation rate of $q$ is atleast $\tau_2$.
Once again, using the definition of $q$,
\begin{align*}
q(Y = y, Z = 0) &= C \cdot v_1(Y = y, Z = 0)\\ &+ (1-C)\cdot  v_2(Y = y, Z = 0). 
\end{align*}
Using the properties of $v_1, v_2$, we can alternately write the above expression as
\begin{align*}
q(Y = y, Z = 0) &= C \cdot \tau_1 \cdot v_1(Y = y, Z = 1)\\ &+ (1-C)\cdot  \tau_2 \cdot v_2(Y = y, Z = 1). 
\end{align*}
Let $a = C \cdot v_1(Y = y, Z = 1) \text{ and } b = (1-C)\cdot v_2(Y = y, Z = 1). $
Then,
\begin{align*}
\frac{q(Y = y, Z = 0)}{q(Y = y, Z = 1)}  = \frac{a\tau_1 + b\tau_2}{a + b} = \tau_2 + \frac{(\tau_1 - \tau_2) a}{a+b} \geq \tau_2,
\end{align*}
since $a, b, (\tau_1 - \tau_2) \geq 0$. Similarly, since $\tau_1, \tau_2 \in [0,1]$
\begin{align*}
\frac{q(Y = y, Z = 1)}{q(Y = y, Z = 0)}  = \frac{a + b}{a\tau_1 + b\tau_2} \geq 1.
\end{align*}
Hence the ratio of the joint distributions for different values of sensitive attributes is atleast $\tau$.
Now to prove the statistical rate bound, we just need to take the ratio of the joint distribution and marginal distribution.
Taking the ratio we get,
\begin{align*}
q(Y= y \mid  Z = 0) = \frac{q(Y= y,  Z = 0)}{q(Z = 0)} \geq \frac{\tau_2 \cdot q(Y= y,  Z = 1)}{\frac{1}{\tau_1} q(Z = 1)} = \tau_1\tau_2 \cdot q(Y= y \mid  Z = 1)  .
\end{align*} 
Similarly,
\begin{align*}
q(Y= y \mid  Z = 1) = \frac{q(Y= y,  Z = 1)}{q(Z = 1)} \geq \frac{q(Y= y,  Z = 0)}{\frac{1}{\tau_2} \cdot q(Z = 0)}= \tau_2 \cdot q(Y= y \mid  Z = 0)  .
\end{align*} 
Since $\tau_2 \leq \tau_1 \leq 1$, the minimum of the two ratios is $\tau_1\tau_2$. Hence the statistical rate of $q$ is  $\tau_1\tau_2$.

\end{proof}

\noindent
While the first result of the above lemma bounds the statistical rate of $q$, the second result will be useful in bounding the statistical rate of the max-entropy distribution obtained using $q$.
Using Lemma~\ref{lem:prior_stat_combined}, we can now prove the representation rate and statistical rate bound on the prior $q_C^w$.
\begin{proof}[Proof of Theorem~\ref{thm:reweighting}]
Proving the first statement is simple.
Since Algorithm~\ref{algo:reweighting} just counts the number of elements in $\mathcal{S}$ satisfying certain properties, the time taken is $|\mathcal{Y}| \cdot N$. In case of hypercube domain, $|\mathcal{Y}| = 2$. Hence the time complexity of the re-weighting algorithm is linear in $N$.

For the statistical rate of $q_C^w$, plugging $v_1 =u$ and $v_2 = v^w$ in Lemma~\ref{lem:prior_stat_combined}, we can get the corresponding ratio for $q_C^w$. In particular, from Lemma~\ref{lem:prior_fairness_1} and Lemma~\ref{lem:prior_fairness_2}, we know that $\tau_1 = 1$ for distribution $u$ and $\tau_2 = \tau$ for distribution $v^w$.
The statement of Lemma~\ref{lem:prior_stat_combined} then tells us that the statistical rate of $q_C^w$ is atleast $\tau$.
\end{proof}

\subsection{Proof of Lemma~\ref{lem:bounding_box}} \label{sec:proof_bounding_box}

In this section, we provide the proof of the bound on the size of the optimal dual solution.
\noindent

\begin{proof}[Proof of Lemma~\ref{lem:bounding_box}]
The proof of this lemma is along similar lines as the proof of bounding box in \cite{SinghV14}.
The key difference is that the proof in \cite{SinghV14} does not consider a prior on the distribution.
We are given that $\theta$ is in the $\eta$-interior of the hypercube, i.e., for each $1\leq i \leq d$, 
$\eta < \theta_i < 1-\eta$.
Hence a ball of radius $\eta$, centered at $\theta$, is contained with the hypercube.

We will first provide a bound for a general prior $q$ and then substitute properties specific to $q_C^w$.
To that end, for a prior $q$ let $L_q$ denote the following quantity,
\[L_q := \log \frac{1}{\min_{\alpha} q(\alpha)}.\]
To show the bound in Lemma~\ref{lem:bounding_box}, we will try to prove that the optimal dual solution, multiplied by a factor of $\nicefrac{1}{L_q}$, lies in a ball of radius $\nicefrac{1}{\eta}$ centered at $\theta$ and later provide a bound on $L_q$.
Let
\[\hat{\lambda} = \theta - \frac{\lambda^\star}{L_q}.\]
Firstly, note that we can bound the objective function of $\eqref{eq:dual-program}$ as follows.
Since the objective function of \eqref{eq:primal-program}  is the negative of KL-divergence, it's value is always less than zero.
Hence, by strong duality we get that, for a given prior $q$,
\[\log \left(\sum_{\alpha \in \set{0,1}^d} q(\alpha) e^{\inner{\alpha - \theta, \lambda^\star}} \right) \leq 0. \]
This implies that
\begin{align*} 
\min_{\alpha} q(\alpha) \sum_{\alpha \in \set{0,1}^d} e^{\inner{\alpha - \theta, \lambda^\star}} \leq  \sum_{\alpha \in \set{0,1}^d} q(\alpha) e^{\inner{\alpha - \theta, \lambda^\star}} \leq 1.
\end{align*}
Therefore, for all $\alpha \in \set{0,1}^d$,
\[e^{\inner{\alpha - \theta, \lambda^\star}} \leq \frac{1}{\min_{\alpha} q(\alpha)}.\]
Taking log both sides, we get
\[\inner{\alpha - \theta, \lambda^\star} \leq \log \frac{1}{\min_{\alpha} q(\alpha)} = L_q.\]
Substituting $\hat{\lambda}$, we get 
\[\inner{\alpha - \theta, \theta - \hat{\lambda}} \leq 1. \eqlabel{1}\]
Note that since this inequality holds for all $\alpha \in \set{0,1}^d$, it also holds for all $\alpha \in \conv{\set{0,1}^d}$.
Next we choose $\alpha$ appropriately so as to bound the distance between $\theta$ and $\hat{\lambda}$.
Choose 
\[\alpha = \theta + \frac{\theta - \hat{\lambda}}{\norm{\theta - \hat{\lambda}}}\cdot \eta.\]
Note that $\norm{\alpha - \theta} \leq \eta$, hence this $\alpha$ lies within the hypercube. Then we can apply \eqref{1} to get
\[\left\langle {\frac{\theta - \hat{\lambda}}{\norm{\theta - \hat{\lambda}}}\cdot \eta, \theta - \hat{\lambda}} \right\rangle \leq 1.\]
This directly leads to
\[\norm{\theta - \hat{\lambda}} \leq \frac{1}{\eta}.\]
Hence we know that $\hat{\lambda}$ is within a ball of radius $\nicefrac{1}{\eta}$ centered at $\theta$.
Substituting the definition of $\hat{\lambda}$ into this bound, we directly get that
\[\left\lVert{\frac{\lambda^\star}{L_q}}\right\rVert \leq \frac{1}{\eta} \implies \norm{\lambda^\star} \leq \frac{L_q}{\eta}. \eqlabel{2}\]
The above bound is generic for any given prior $q$. To substitute $q = q_C^w$, we simply need to calculate $L_{q_C^w}$.
Note that the prior $q_C^w$ assigns a uniform probability mass to all points not in the dataset $\mathcal{S}$.
Hence, for any $\alpha \in \set{0,1}^d$
\[q_C^w(\alpha) \geq \frac{C}{|\Omega|} = \frac{C}{2^d}.\]
Therefore, 
\[L_{q_C^w} \leq d \log\frac{1}{C}.\]
Substituting the value of $L_{q_C^w}$ in \eqref{2}, we get
\[\norm{\lambda^\star} \leq \frac{d}{\eta}\log\frac{1}{C}.\]

\end{proof}

\noindent
We note that, for our applications, we can show that the assumption on $\theta$ in the lemma follows from an assumption on the ``non-redundancy'' of the data set. 
Using recent results of \cite{straszakMED2019}, we can get around this assumption and we omit the details from this version of the paper.

\paragraph{Interiority of expected vector. }
The assumption that $\theta$ should be in $\eta$-interior the hypercube can translate to an assumption on the ``non-redundancy'' of the data set, for some natural choices of $\theta$.
For example, to maintain consistency with the dataset $\mathcal{S}$, $\theta$ can be set to be the following:
\[\theta = \sum_{\alpha \in \mathcal{S}} \frac{n_\alpha}{N} \alpha.\]
This corresponds to the mean of the dataset.
In this case, the assumption that for each $1\leq i \leq d$, 
\[\eta < \theta_i\]
implies that more than $\eta$-fraction of the elements in the dataset $\mathcal{S}$ have the $i$-th attribute value 1.
Similarly,
\[\theta_i > 1 - \eta\]
implies that more than $\eta$-fraction of the elements in the dataset $\mathcal{S}$ have the $i$-th attribute value 0.
The reason that this is a non-redundancy assumption is that it implies that no attribute is redundant in the dataset.
For example, if for an attribute $i$, $\theta_i$ was 1 it would mean that all elements in $\mathcal{S}$ have the $i$-th attribute 1 and in that case, we can simply remove the attribute.

\subsection{Proof of Lemma~\ref{lem:counting-oracles}} \label{sec:proof_counting_oracle}

Next the proof of efficient dual oracles is provided here.

\begin{proof}[Proof of Lemma~\ref{lem:counting-oracles}]
For the given prior $q$ and vector $\theta$, let $g_{\theta,q}$ denote the sum, i.e.,
\[g_{q}(\lambda) :=\sum_{\alpha\in\Omega} q(\alpha)e^{\inner{\alpha,\lambda}} \]
Then the dual function $h_{\theta,q}(\lambda)$ is
\[h_{\theta,q}(\lambda) =\log \left( g_{q}(\lambda) \right) - \inner{\theta, \lambda}.\]
The main bottleneck in computing the above quantities is evaluating the summation terms. For all three terms, the summation is obtained from the derivative of $g_q$.
\[\nabla g_{q}(\lambda) = \sum_{\alpha\in\Omega}  \alpha  \cdot q(\alpha)e^{\inner{\alpha,\lambda}} \text{ and } \]
\[\nabla^2 g_{q}(\lambda) = \sum_{\alpha\in\Omega}  \alpha\alpha^\top  \cdot q(\alpha)e^{\inner{\alpha,\lambda}}.\]
Then, the gradient and Hessian can be represented using $\nabla g_{q}$ and $\nabla^2 g_{q}$.
\begin{align*}
\nabla h_{\theta,q}(\lambda)  &= \frac{1}{g_{q}(\lambda)} \nabla g_{q}(\lambda) - \theta,
\end{align*}
\begin{align*}
\nabla^2 h_{\theta,q}(\lambda)  = \frac{1}{g_{q}(\lambda)} \nabla^2 g_{q}(\lambda) - \frac{1}{g_{q}(\lambda)^2} \nabla g_{q}(\lambda)\nabla g_{q}(\lambda)^\top.
\end{align*}

\noindent
Given the above representation of gradient and oracle, if we are able to compute $g_{q}(\lambda), \nabla g_{q}(\lambda), \nabla^2 g_{q}(\lambda)$ efficiently, then using these to compute $h_{\theta,q}(\lambda)$, $\nabla h_{\theta,q}(\lambda)$ and $\nabla^2 h_{\theta,q}(\lambda)$ just involves constant number of addition and multiplication operations, time taken for which is linear in bit complexities of the numbers involved.
Hence we will focus on efficiently evaluating the summations.
Recall that 
\[q = q_C^w = C \cdot u + (1-C) \cdot w.\]
Since $g_{q}(\lambda), \nabla g_{q}(\lambda), \nabla^2 g_{q}(\lambda)$ are all linear in $q$, we can evaluate the summations separately for $u$ and $w$.

For $w$, since the support of the distribution is just the dataset $\mathcal{S}$,
\begin{align*}
g_{w}(\lambda) = \sum_{\alpha\in\Omega} w(\alpha)e^{\inner{\alpha,\lambda}} 
 = \sum_{\alpha\in S} w(\alpha)e^{\inner{\alpha,\lambda}}
\end{align*}
We can directly evaluate the summation using $O(Nd)$ operations (first compute the inner product then summation), where each operation is linear in the bit complexity of $w$ and $e^\lambda$.
For $\nabla g_{w}(\lambda)$, we can represent it as
\[g_{w}(\lambda)  = \sum_{\alpha\in S} \alpha \cdot w(\alpha)e^{\inner{\alpha,\lambda}}.\]
Once again we can evaluate all inner products using $O(Nd)$ operations and then compute the gradient vector in another $O(Nd)$ operations.
In a similar manner, we can also evaluate $\nabla^2 g_{w}(\lambda)$ in $O(Nd^2)$ operations.

Next we need bounds on the number of operations required for the uniform part of $q$.
The main idea is that if the distribution is uniform over the entire domain, then the summation can be separated in terms of the individual features.
For the uniform distribution, let us write $\lambda$ as $(\lambda_1,\ldots,\lambda_d)$, where $\lambda_i$ corresponds to $i$th attribute and let us define variables:
\[\overline{\alpha}_i := \alpha_i \cdot e_i,\]
where $e_i$ is the standard basis vector in $\R^d$, with 1 in the $i$-th location and 0 elsewhere. Let
\begin{align*}
s_i^0 :=& \sum_{\alpha_i\in \set{0,1}} e^{\lambda_i \cdot\alpha_i},\nonumber\\
s_i^1 :=& \sum_{\alpha_i\in \set{0,1}} \overline{\alpha}_i e^{\lambda_i \cdot \alpha_i},\nonumber\\
s_i^2 :=& \sum_{\alpha_i\in \set{0,1}} \overline{\alpha}_i\overline{\alpha}_i^\top e^{\lambda_i \cdot \alpha_i},\nonumber
\end{align*}
for all $i\in \set{1, \dots, d}$ and $\alpha_i\in \set{0,1}$.
Next, we can compute the $g_{u} (\lambda), \nabla g_{u}(\lambda), \nabla^2 g_{u}(\lambda)$ using these values.
\begin{align*}
g_{u} (\lambda)= \frac{1}{|\Omega|}\sum_{\alpha\in\Omega} e^{\inner{\alpha,\lambda}}
	=& \frac{1}{\abs{\Omega}} \prod_{i=1}^d s_i^0,\nonumber\\
\nabla g_{u} (\lambda) = \frac{1}{|\Omega|}\sum_{\alpha\in\Omega} \alpha \cdot e^{\inner{\alpha,\lambda}}
	=& \frac{1}{\abs{\Omega}} \sum_{i=1}^d \left(s_i^1 \prod_{j\neq i} s_j^0\right),\nonumber
\end{align*}
\begin{align*}
\nabla^2 g_{u} (\lambda) = \frac{1}{|\Omega|}\sum_{\alpha\in\Omega} \alpha\alpha^\top \cdot e^{\inner{\alpha,\lambda}} \nonumber
	= \frac{1}{\abs{\Omega}} \sum_{i=1}^d \left[s_i^2 \prod_{j\neq i} s_j^0 + \sum_{j\neq i} s_i^1 (s_j^1)^\top \prod_{k\neq i,j}s_k^0\right].\nonumber
\end{align*}
Evaluating $g_{u} (\lambda)$  involves $(d-1)$ multiplication operations.
Similarly, evaluating $\nabla g_{u} (\lambda)$ involves $O(d^2)$ addition and multiplication operations.
Finally, evaluating $\nabla^2 g_{u} (\lambda)$ involves $O(d^3)$ addition and multiplications operations. 
Each operation takes time polynomial in the bit complexity of $e^\lambda$.

We have shown that for both parts $u$ and $w$, evaluating the above summations takes time polynomial in the bit complexities of the numbers involved.
Since $q$ is a convex combination of $u$ and $w$, computing $g_{u} (\lambda)$, $\nabla g_{u} (\lambda)$ and $\nabla^2 g_{u} (\lambda)$ also takes time polynomial in the bit complexities of the numbers involved.
Specifically, computing $g_{u} (\lambda)$ requires $O(Nd)$  operations, computing $\nabla g_{u} (\lambda)$ requires $O(d(N+d))$  operations and computing $g_{u} (\lambda)$ requires $O(d^2(N+d))$  operations.

\end{proof}

\subsection{Proof of Theorem~\ref{thm:statistical_rate_bound}} \label{sec:proof_stat_rate}
Finally, the proof of the statistical rate guarantee is given in this section.

\begin{proof}[Proof of Theorem~\ref{thm:statistical_rate_bound}]
The proof of this theorem uses the bounds on the distribution of $q_C^w$ that are obtained from Lemma~\ref{lem:prior_stat_combined}.
By the definition of $\delta$, we have that
\begin{align*}
q_C^w(Y=y, Z=z) - \delta \leq p^\star(Y=y, Z=z)  \leq q_C^w(Y=y, Z=z) + \delta.
\end{align*}
\noindent
Using this inequality, we can bound the ratio of the above term for different sensitive attributes as
\[\frac{p^\star(Z = z_1, Y = y)}{p^\star(Z = z_2, Y = y)} \geq  \frac{q_C^w(Y=y, Z=z_1) - \delta}{q_C^w(Y=y, Z=z_2) + \delta}.\]
Next, applying Lemma~\ref{lem:prior_stat_combined}, with $v_1 = u$ and $v_2 = v^w$, we have the following properties of $q_C^w$
\[ q_C^w(Y = y, Z = 0)  \geq \tau \cdot q_C^w(Y = y, Z = 1),\]
and 
\[ q_C^w(Y = y, Z = 1)  \geq q_C^w(Y = y, Z = 0).\]
Furthermore, since $q_C^w$ assigns a uniform mass to all points in $\Omega$, we can also get a lower bound on $q_C^w(Y=y, Z=z_2) $.
\begin{align*}
 q_C^w(Y=y, Z=z) = \sum_{\alpha \mid y(\alpha) = y, z(\alpha) = z} q_C^w(\alpha) 
 \geq \sum_{\alpha \mid y(\alpha) = y, z(\alpha) = z} \frac{C}{|\Omega|} = \frac{C}{2 |\mathcal{Y}|}.
\end{align*}
We can now use the fairness guarantee on $q_C^w$ and lower bound for distribution to get the ratio bounds for max-entropy distribution.
\begin{align*}
\frac{p^\star(Y = y, Z = 0)}{p^\star(Y = y, Z = 1)} &\geq  \frac{\tau \cdot q_C^w(Y=y, Z=1) - \delta}{q_C^w(Y=y, Z=1) + \delta} \\
& = \tau - \delta \cdot \frac{(1 + \tau)}{q_C^w(Y=y, Z=1) + \delta} \\
&\geq \tau - \delta \cdot \frac{(1 + \tau)}{\frac{C}{2|\mathcal{Y}|} + \delta}.
\end{align*}
By the choice of $\theta$, we know that 
\[1 - \theta_\ell > \theta_\ell \implies p^\star(Z = 1) \geq p^\star(Z=0).\]
Therefore,
\begin{align*}
\frac{p^\star(Y = y \mid Z = 0)}{p^\star(Y = y \mid  Z = 1)} = \frac{p^\star(Y = y, Z = 0)}{p^\star(Y = y, Z = 1)} \cdot \frac{p^\star(Z = 1)}{p^\star(Z = 0)} \geq \tau - \delta \cdot \frac{(1 + \tau)}{\frac{C}{2|\mathcal{Y}|} + \delta}.    
\end{align*}
Similarly, for the other direction of this ratio, we can get
\begin{align*}
\frac{p^\star(Y = y, Z = 1)}{p^\star(Y = y, Z = 0)} &\geq  \frac{q_C^w(Y=y, Z=0) - \delta}{q_C^w(Y=y, Z=0) + \delta} \\
& = 1 - \delta \cdot \frac{2}{q_C^w(Y=y, Z=0) + \delta}\\
&\geq  1 - \delta \cdot \frac{2}{\frac{C}{2|\mathcal{Y}|} + \delta}.
\end{align*}
Once again, 
\[1 - \theta_\ell > \tau \cdot \theta_\ell \implies p^\star(Z = 0) \geq p^\star(Z=1).\]
Therefore,
\begin{align*}
\frac{p^\star(Y = y \mid Z = 1)}{p^\star(Y = y \mid  Z = 0)} = \frac{p^\star(Y = y, Z = 1)}{p^\star(Y = y, Z = 0)} \cdot \frac{p^\star(Z = 0)}{p^\star(Z = 1)} \geq \tau \left( 1 - \delta \cdot \frac{2}{\frac{C}{2|\mathcal{Y}|} + \delta} \right).
\end{align*}
Note that
\[\tau \left( 1 - \delta \cdot \frac{2}{\frac{C}{2|\mathcal{Y}|} + \delta} \right) \geq \tau - \delta \cdot \frac{(1 + \tau)}{\frac{C}{2|\mathcal{Y}|} + \delta}.\]
Using $|\mathcal{Y}| = 2$, we get that the statistical rate is atleast
\[\tau -\frac{  4\delta \cdot (1 + \tau)}{C + 4\delta}.\]

\end{proof}

\section*{Acknowledgements}
This research was supported in part by NSF CCF-1908347 and an AWS MLRA Award. 
We thank Ozan Yildiz for initial discussions on algorithms for max-entropy optimization.

\bibliographystyle{plain}
\bibliography{refs} 
\clearpage
\newpage

\appendix

\section{Sampling oracle} \label{sec:sampling_oracle}
As stated earlier, the max-entropy distribution $p^\star$ can be succinctly represented using the solution of the dual program $\lambda^\star$.
In particular, we have that
$$p^\star(\alpha) = \frac{q(\alpha) e^{\inner{\lambda^\star,\alpha}}}{\sum_{\beta\in\Omega} q(\beta) e^{\inner{\lambda^\star,\beta}}}.$$ 
Using the efficient counting oracles of Lemma~\ref{lem:counting-oracles} and bounding box of Lemma~\ref{lem:bounding_box}, we efficiently compute a good approximation to the dual solution $\lambda^\star$.
But sampling from the distribution $p^\star$ can still be difficult due to the large domain size.
In this section, we show that given $\lambda^\star$ we can efficiently sample from the max-entropy distribution $p^\star$ using the counting oracles described earlier.

\begin{theorem}[\bf Sampling from counting] \label{thm:sampling}
There is an algorithm that, given a weighted distribution $w : \mathcal{S} \to [0,1]$ and $\lambda \in \R^d$, returns a sample from the distribution $p$, where for any $\alpha \in \Omega$
\[p(\alpha) = \frac{q_C^w(\alpha) e^{\inner{\lambda,\alpha}}}{\sum_{\beta\in\Omega} q_C^w(\beta) e^{\inner{\lambda,\beta}}}.\]
The running time of this algorithm is polynomial in $N, d$ and bit complexities of all numbers involved: $w(\alpha)$ for $\alpha \in \mathcal{S}$ and $e^\lambda_i$, for $i \in \set{1, \dots, d}$.
\end{theorem}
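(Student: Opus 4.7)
The plan is to exploit the same decomposition $q_C^w = C \cdot u + (1-C) \cdot w$ that was central to Lemma~\ref{lem:counting-oracles}. Under this decomposition, the target distribution $p$ can be written as a convex combination of two tilted distributions. Concretely, define
\begin{equation*}
Z_u := \sum_{\beta \in \Omega} u(\beta) e^{\inner{\lambda,\beta}}, \qquad Z_w := \sum_{\beta \in \mathcal{S}} w(\beta) e^{\inner{\lambda,\beta}},
\end{equation*}
and let $Z := C \cdot Z_u + (1-C) \cdot Z_w$, so that for every $\alpha \in \Omega$,
\begin{equation*}
p(\alpha) \;=\; \frac{C \cdot Z_u}{Z} \cdot \underbrace{\frac{u(\alpha) e^{\inner{\lambda,\alpha}}}{Z_u}}_{=: p_u(\alpha)} \;+\; \frac{(1-C) \cdot Z_w}{Z} \cdot \underbrace{\frac{w(\alpha) e^{\inner{\lambda,\alpha}}}{Z_w}}_{=: p_w(\alpha)}.
\end{equation*}
Thus sampling from $p$ reduces to (i) computing the mixing probability $\pi := C Z_u / Z$, (ii) flipping a biased coin with bias $\pi$, and (iii) sampling from $p_u$ or $p_w$ accordingly. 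The quantities $Z_u$ and $Z_w$ are exactly the evaluations of $g_q$ on the uniform and weighted components considered in the proof of Lemma~\ref{lem:counting-oracles}, so both are computable in time polynomial in $N$, $d$, and the bit complexities of $w$ and $e^{\lambda_i}$.

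Sampling from $p_w$ is straightforward: its support is the set $\mathcal{S}$ of size at most $N$, so we enumerate, tabulate the unnormalized masses $w(\alpha) e^{\inner{\lambda,\alpha}}$ for $\alpha \in \mathcal{S}$, and draw a single categorical sample in $O(Nd)$ bit operations. Sampling from $p_u$ is the critical step, and here I would use that $u$ is a product distribution, so the tilted measure factorizes:
\begin{equation*}
p_u(\alpha) \;=\; \frac{u(\alpha) e^{\inner{\lambda,\alpha}}}{Z_u} \;=\; \prod_{i=1}^d \frac{e^{\lambda_i \alpha_i}}{1 + e^{\lambda_i}}.
\end{equation*}
In other words, under $p_u$ the coordinates $\alpha_1, \dots, \alpha_d$ are independent Bernoullis with $\Pr[\alpha_i = 1] = e^{\lambda_i}/(1+e^{\lambda_i})$. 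Hence I can sample from $p_u$ in $d$ independent coordinate flips, each in time polynomial in the bit complexity of $e^{\lambda_i}$.

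The only mild subtlety is that the two normalizers $Z_u$ and $Z_w$ can be large in absolute value, so to form the mixing probability $\pi$ I would work with scaled or log-domain quantities to keep all intermediate numbers of polynomial bit complexity; this is the same numerical care already required by the counting oracles, and it is the one place where care is needed but no new idea. Putting these three pieces together yields a sampling algorithm whose correctness is immediate from the mixture identity above, and whose runtime is polynomial in $N$, $d$, and the bit complexities of the inputs, as claimed.
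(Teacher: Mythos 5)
Your proof is correct, but it takes a genuinely different route from the paper's. The paper uses the classical sampling-from-counting reduction: it samples the coordinates of $\alpha$ sequentially, computing each conditional marginal $\mathbb{P}[A_i = 0 \mid A_1 = a_1, \dots, A_{i-1}=a_{i-1}]$ by invoking the counting oracle of Lemma~\ref{lem:counting-oracles} on the appropriately restricted subdomain, for a total of $O(Nd^2)$ operations. You instead observe that the target $p$ is itself a mixture of two structurally simple distributions: writing $Z = CZ_u + (1-C)Z_w$, you get $p = \pi\, p_u + (1-\pi)\, p_w$ with $\pi = CZ_u/Z$, where $p_u$ is a product of independent Bernoullis (so directly sampleable coordinate-by-coordinate in $O(d)$ bit operations) and $p_w$ is supported on the $N$ points of $\mathcal{S}$ (so directly sampleable by a single categorical draw in $O(Nd)$ operations). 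Your approach is both conceptually cleaner — it uses the convex decomposition of $q_C^w$ once, up front, rather than re-invoking the counting oracle $d$ times — and asymptotically faster ($O(Nd)$ versus $O(Nd^2)$). The paper's approach is, on the other hand, the more generic recipe: it works for any prior admitting a counting oracle, without needing the prior's components to be individually and directly sampleable. Both arguments are valid proofs of the stated theorem.
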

\noindent
The equivalence of counting and sampling is well-known and a very useful result \cite{jerrum1986random}.
We provide the proof for our setting here, for the sake of completion.
\begin{proof}
As mentioned before, the goal is to sample from the distribution
\[p(\alpha) = \frac{q_C^w(\alpha) e^{\inner{\lambda,\alpha}}}{\sum_{\beta\in\Omega} q_C^w(\beta) e^{\inner{\lambda,\beta}}}.\]
The primary bottleneck in sampling is evaluating the normalizing term,
\[\sum_{\beta\in\Omega} q_C^w(\beta) e^{\inner{\lambda,\beta}}.\]
To evaluate this sum, we have an efficient oracle, i.e, the counting oracle from Lemma~\ref{lem:counting-oracles}.
The lemma (and the algorithm) allow us to calculate the sum in $O(Nd)$ operations, where each operation has bit complexity polynomial in the numbers involved: $w(\alpha)$ for $\alpha \in \Omega$ and $e^{\lambda}$.
Hence, we can evaluate the normalizing term efficiently.

However, we still cannot sample by enumerating all probabilities since the size of the domain is exponential. 
To efficiently sample from the distribution, we sample each feature of $\alpha$ individually. 
Let $A$ denote the random variable with probability distribution $p$. 
Let $A_1$ denote the element at the first position of $A$.

\begin{align*}
    \mathbb{P}[A_1 = 0] = \frac{\sum_{\alpha\in\Omega \mid \alpha_1 = 0} q_C^w(\alpha) e^{\inner{\lambda,\alpha}}}{\sum_{\beta\in\Omega} q_C^w(\beta) e^{\inner{\lambda,\beta}}} 
    = \frac{\sum_{\hat{\alpha} \in\Omega^{(1)}} q_{C,1}^w(\hat{\alpha}) e^{\inner{\lambda^{(1)},\hat{\alpha}}}}{\sum_{\beta\in\Omega} q_C^w(\beta) e^{\inner{\lambda,\beta}}}.
\end{align*}
Here $\lambda^{(1)}$ is $\lambda$ without the first element, $\Omega^{(1)}$ is the subdomain of all feature except the first feature and $q_{C,1}^w$ is the distribution $q_C^w$ conditional on the first feature being always 0.
Note that $q_{C,1}^w$ is a distribution supported on $\Omega^{(1)}$, and we can use the counting oracle of Lemma~\ref{lem:counting-oracles} to calculate the sum
\[\sum_{\hat{\alpha} \in\Omega^{(1)}} q_{C,1}^w(\hat{\alpha}) e^{\inner{\lambda^{(1)},\hat{\alpha}}}\]
in $O(N(d-1))$ operations.
Hence we can calculate the probability $\mathbb{P}[A_1 = 0]$ in $O(Nd)$ operations. Then we can do a coin toss, whose tail probability is chosen to be $\mathbb{P}[A_1 = 0]$, and set $\alpha_1 = 1$ if we heads and $\alpha_1=0$ otherwise.
Next depending on the value we get for $\alpha_1$, we can calculate the marginal probability of $\alpha_2$ being 0.
Say $\alpha_1 = a_1$. Then
\begin{align*}
    \mathbb{P}[A_1 = 0] &= \frac{\sum_{\alpha\in\Omega \mid \alpha_1 = a_1, \alpha_2 = 0} q_C^w(\alpha) e^{\inner{\lambda,\alpha}}}{\sum_{\beta\in\Omega \mid \beta_1 = a_1} q_C^w(\beta) e^{\inner{\lambda,\beta}}}.
\end{align*}
We can repeat the above process of calculating these summations using the counting oracle and once again sample a value of $\alpha_2$ using the biased coin toss.
Repeating this process $d$ times, we get a sample from the distribution $p$.
The number of operations required is $O(Nd^2)$, where each operation has bit complexity polynomial in the numbers involved: $w(\alpha)$ for $\alpha \in \Omega$ and $e^{\lambda}$.

\end{proof}

\begin{figure*}[t]
\centering
  \includegraphics[width=\linewidth]{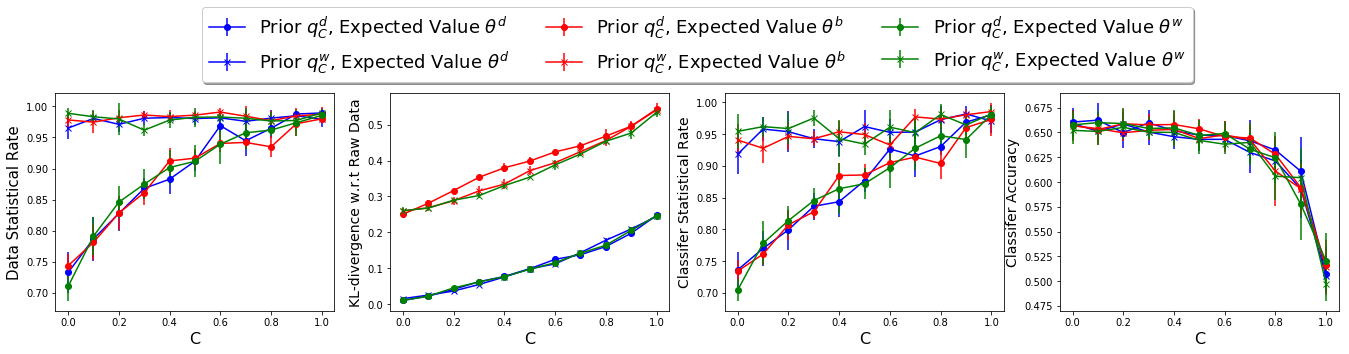}
  \subfloat[Data statistical rate vs C]{\hspace{.25\linewidth}}
\subfloat[KL-divergence w.r.t. raw data vs C ]{\hspace{.25\linewidth}}
  \subfloat[Classifier Statistical rate vs C   ]{\hspace{.25\linewidth}}
\subfloat[Classifier Accuracy vs C ]{\hspace{.25\linewidth}}\\
\caption{ {Comparison of max-entropy distributions with different priors and expectation vectors for small version of COMPAS dataset. Note that a value of C = 1 effectively would result in sampling uniformly at random from the entire domain. Hence, as expected, we see fairness increase and accuracy decrease as C increases.
(a) Data statistical rate for COMPAS dataset. We observe that  using $q_C^{\textrm{w}}$ is better with respect to statistical rate than using  $q_C^{\textrm{d}}$. The value of C does not significantly affect the results for  $q_C^{\textrm{w}}$; this is expected since $q_C^w$ is constructed to be fair for all $C$.
(b) KL-divergence between the empirical distributions as compared with the raw COMPAS data. We observe that this value is smaller when using the expected vector $\theta^{\textrm{d}}$.
(c) Classifier statistical rate vs C. Similar to data statistical rate results for COMPAS dataset, we observe that using the  $q_C^{\textrm{w}}$ prior results in a fairer outcome. Here there is a slight increase in fairness as C is increased even for $q_C^{\textrm{w}}$.
(d) Classifier accuracy vs C. We observe that there is no significant difference in accuracy across different metrics and priors. This is surprising, especially in light of the significant differences with respect to how well they capture the raw data.%
}}
\label{fig:maxEnt_compare}
\end{figure*}

\begin{figure*}[t]
\centering
  \includegraphics[width=\linewidth]{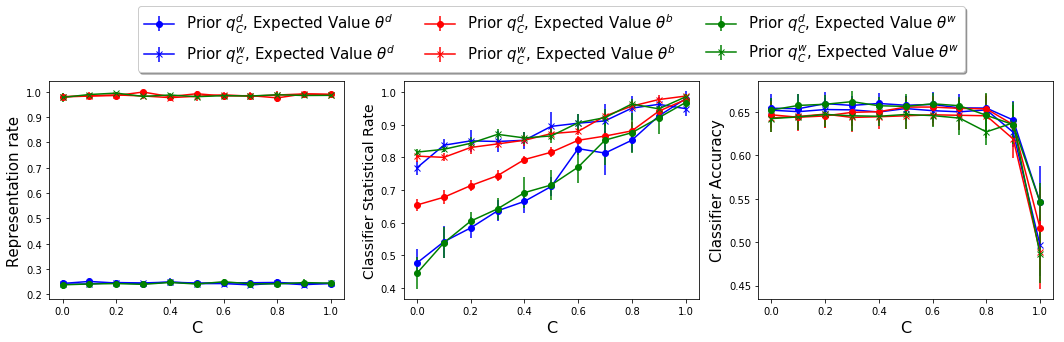}
  \subfloat[Representation rate vs C.]{\hspace{.33\linewidth}}
\subfloat[ Classifier Statistical Rate vs C.]{\hspace{.33\linewidth}}
  \subfloat[Classifier Accuracy vs C.]{\hspace{.33\linewidth}}
\caption{ The figures show the comparison of max-entropy distributions with different prior distributions and expected values. The base dataset is the small version of COMPAS.
The first figure show the representation rate of different max-entropy distribution; the representation rate is 1 when using balanced expected vectors, such as $\theta^w$ or $\theta^b$. The second and third figure show the statistical rate and accuracy of Gaussian Naive Bayes classifier trained on the output distribution.
While the trend across different parameters is the same as observed using decision tree classifier, we note that in this case, the classifier statistical rate is relatively smaller for smaller values of $C$.
}
\label{fig:maxEnt_compare_other}
\end{figure*}

\begin{figure*}[t]
\centering
  \includegraphics[width=\linewidth]{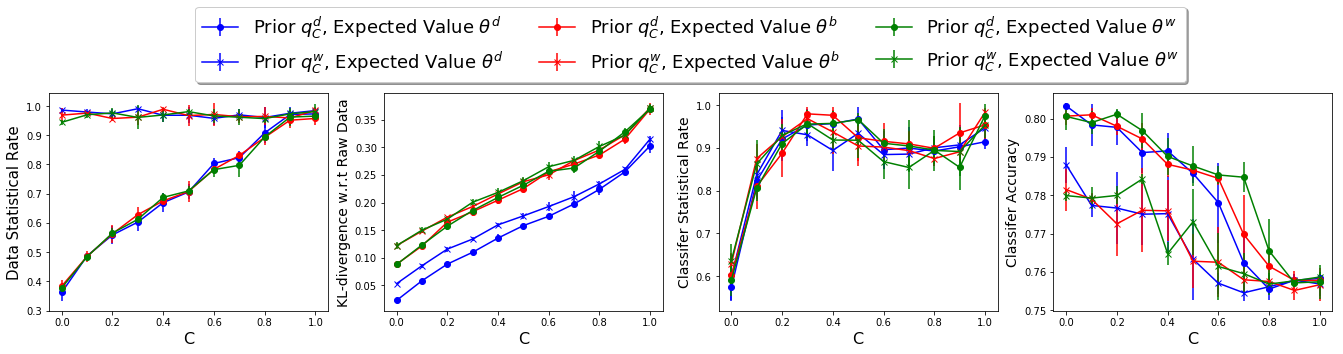}
  \subfloat[Data statistical rate vs C ]{\hspace{.25\linewidth}}
\subfloat[KL-divergence w.r.t. raw data vs C ]{\hspace{.25\linewidth}}
  \subfloat[Classifier Statistical rate vs C   ]{\hspace{.25\linewidth}}
\subfloat[Classifier Accuracy vs C ]{\hspace{.25\linewidth}}\\
\caption{ {Comparison of max-entropy distributions with different priors and expectation vectors for small version of Adult dataset.
(a) Data statistical rate for Adult dataset. Once again using $q_C^{\textrm{w}}$ is better with respect to statistical rate than using  $q_C^{\textrm{d}}$. %
(b) KL-divergence between the empirical distributions as compared with the raw Adult data. We observe that this value is smaller when using the expected vector $\theta^{\textrm{d}}$.
However, in this case the gap between divergence when using $q_C^w$ and divergence when using $q_C^d$ is smaller than observed with COMPAS.
(c) Classifier statistical rate vs C. In this case, using even $q_C^d$ achieves relatively good statistical rate. However, the statistical rate of max-entropy distributions using $q_C^w$ is slightly better in most cases.
(d) Classifier accuracy vs C. As expected, classifier accuracy is higher for distributions using $q_C^d$ than distributions using $q_C^w$. This is because $q_C^w$ involves weighing the samples in a manner that is not always consistent with the frequency of the samples.
}}
\label{fig:maxEnt_compare_clf}
\end{figure*}

\begin{figure*}[t]
\includegraphics[width=\textwidth]{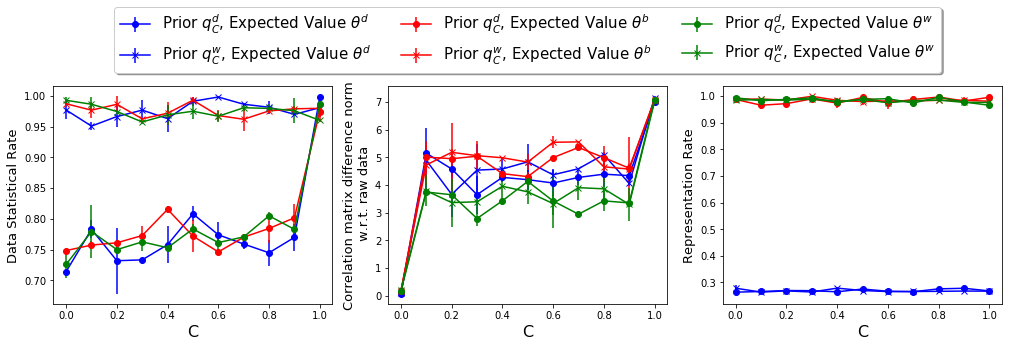}
  \subfloat[Data statistical rate vs C ]{\hspace{.33\linewidth}}
\subfloat[Utility vs C ]{\hspace{.33\linewidth}}
  \subfloat[Representation rate vs C]{\hspace{.33\linewidth}}
\caption{Comparison of statistical rate, representation rate and correlation matrix difference with respect to raw data for max-entropy distributions with different priors and expected values.
The base dataset is the large version of COMPAS.}
\label{fig:DI_Utlity_MaxEnt_large}
\end{figure*}

\begin{figure*}[t]
\includegraphics[width=\textwidth]{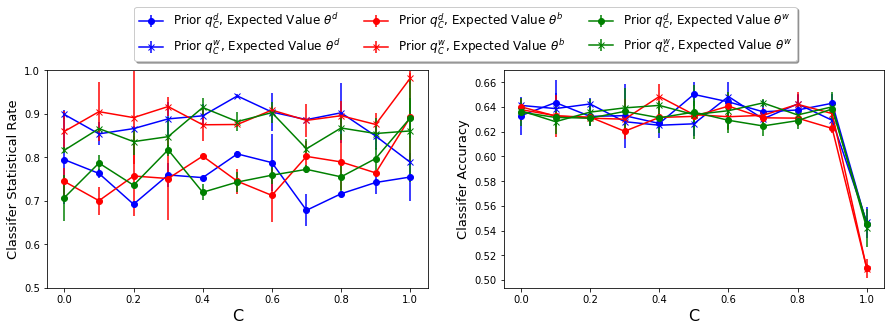}
  \subfloat[Classifier statistical rate vs C ]{\hspace{.5\linewidth}}
\subfloat[Classifier accuracy vs C ]{\hspace{.5\linewidth}}
\caption{Comparison of Decision Tree classifier trained on data from different max-entropy distributions with different prior distributions and expected values. The base dataset is the large version of COMPAS.}
\label{fig:Clf_DI_Utlity_MaxEnt_DTC_large}
\end{figure*}

\begin{figure*}[t]
\includegraphics[width=\textwidth]{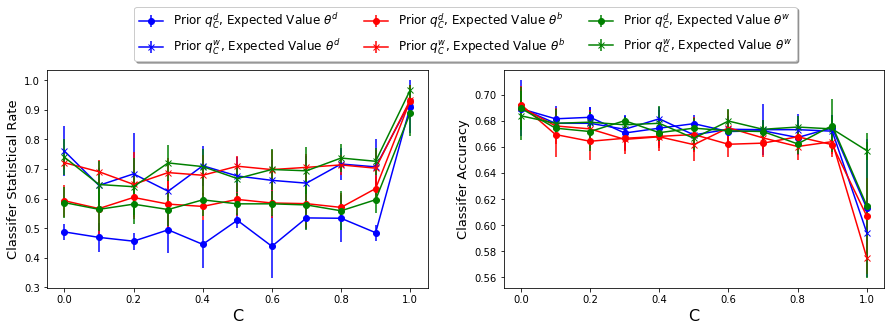}
  \subfloat[Classifier statistical rate vs C ]{\hspace{.5\linewidth}}
\subfloat[Classifier accuracy vs C ]{\hspace{.5\linewidth}}
\caption{Comparison of Gaussian Naive Bayes classifier trained on data from max-entropy distributions with different prior distributions and expected values. The base dataset is the large version of COMPAS.}
\label{fig:Clf_DI_Utlity_MaxEnt_GNB_large}
\end{figure*}

\section{Additional details and empirical results for small COMPAS and Adult datasets}\label{sec:experiments1}

\noindent
\paragraph{Features of Adult dataset.}
The demographic features used from this dataset are gender, race, age and years of education.
The age attribute  in this case is categorized by decade, with 7 categories (the last one being age $\geq 70$ years).
The education years attribute is also a categorical attribute, with the categories being $(< 6), 6, 7, \cdots, 12, (>12)$ years.
The label is a binary marker indicating whether the annual income is greater than \$50K or not.

\noindent
\paragraph{Features of small version of COMPAS dataset.}
For this dataset, we use the features gender, race, age, priors count, and charge degree as features, and a binary marker of recidivism within two years as the label.

Given training data $\mathcal{S}$, we can estimate different maximum entropy distributions with given parameters using $\mathcal{S}$.
We use two kinds of prior distributions: 
(1)  $q_C^{\textrm{d}}$ assigns uniform weights to the samples, i.e., $w = \set{\nicefrac{n_\alpha}{N}}_{\alpha \in \mathcal{S}}$, and 
(2) $q_C^{\textrm{w}}$ assigns weights returned by the  Algorithm~\ref{algo:reweighting} (also used for results in Table~\ref{tab:Clf_DI_Utlity_Compare}).

We use three kinds of expectation vectors: 
(a) the expected value of the dataset $\mathcal{S}$,
$$\theta^d := \left(\sum_{\alpha \in \mathcal{S}} \frac{n_\alpha}{N} X_\alpha, \sum_{{\alpha \in \mathcal{S}}} \frac{n_\alpha}{N} Y_\alpha, \sum_{\alpha \in \mathcal{S}} \frac{n_\alpha}{N} Z_\alpha \right).$$
The resulting max-entropy distribution is our best guess for the underlying distribution without any modification for fairness.
(b) $\theta^b$ and (c) $\theta^w,$
as defined in Remark~\ref{rem:other_theta}.

This results in six distributions; we generate a synthetic datasets from each distribution to use in our evaluation.
We compare the statistical rate, representation rate, divergence from empirical distribution and classifier performance of datasets from these distributions, for varying values of parameter $C$.

\subsection{Comparison across priors and expected value vectors}
We first evaluate the dataset generated using max-entropy distributions with different combinations of prior weights and expected value mentioned earlier.
The results for this evaluation are present in Figure~\ref{fig:maxEnt_compare} and Figure~\ref{fig:maxEnt_compare_clf}.

Figure~\ref{fig:maxEnt_compare}a and Figure~\ref{fig:maxEnt_compare_clf}a show that for both COMPAS and Adult datasets, the max-entropy distributions obtained using prior $q_C^{\textrm{w}}$ achieve higher statistical rate than the distributions obtained using  $q_C^{\textrm{d}}$.
However, the KL-divergence of the max-entropy distributions obtained using expected value $\theta^{\textrm{w}} $ or $\theta^{\textrm{b}} $ are higher as well.
As the samples in the raw dataset are unbalanced with respect to gender, the distributions using balanced marginal distributions (i.e., $q_C^w$) are expected to have a larger divergence from the empirical distribution of raw data than the distributions using the expected value of data.
Note that, according to the application, one can aim to achieve high representation rate or high statistical rate or both in the final distribution.
The max-entropy distribution using $q_C^{\textrm{w}}$ and $\theta^{\textrm{d}}$ achieves high statistical rate and low representation rate, while the max-entropy distribution using $q_C^{\textrm{w}}$ and $\theta^{\textrm{b}}$ achieves high statistical rate and high representation rate.

\subsection{Comparison of classifier trained using different max-entropy distribution datasets}
For the decision tree classifier trained on the generated data, we compute the statistical rate using the predictions to evaluate the effects of different training data on the fairness of the classifier.
In addition, we report the classifier accuracy when trained on each output dataset.
The classifier results are presented in Figure~\ref{fig:maxEnt_compare}c,d and Figure~\ref{fig:maxEnt_compare_clf}c,d.

Once again the the max-entropy distributions obtained using prior distribution $q_C^{\textrm{w}}$ achieve better classifier statistical rate than the distributions obtained using $q_C^{\textrm{d}}$.
The accuracy of the classifiers trained on datasets obtained using prior distribution $q_C^{\textrm{w}}$ is slightly lower than the accuracy of the classifiers trained on distributions obtained using sample uniform weights.
However, it is interesting to note that the significant difference in ``accuracy'' of the data all but disappears when passed through the classifier.
Importantly, the accuracy drops sharply as the value of $C$ increases as $C=1$ assigns equal probability mass to all points in the domain and ignores the original samples.
This suggests a $C$ value in the low-to-mid range would likely optimize accuracy and statistical rate simultaneously.

Figure~\ref{fig:maxEnt_compare_other}b,c presents the Gaussian Naive Bayes classifier statistical rate and accuracy, when trained using different max-entropy distributions on the COMPAS dataset.

\subsection{Comparison of representation rate}
Figure~\ref{fig:maxEnt_compare_other}a shows the variation of representation rate.
As expected, distributions obtained using expected value $\theta^{\textrm{b}}$ or $\theta^{\textrm{w}}$ have representation rate close to 1.

\section{Additional empirical results on larger COMPAS dataset} \label{sec:other_experiments}

In this section, we present additional empirical results on the larger version of the COMPAS dataset.
In the small version of the dataset, the features used were  sex, race, age, priors count, and charge degree as features, and uses a binary marker of recidivism within two years as the label.
The age attribute was categorized into three categories, younger than 25, between 25 and 45, and older than 45, and the priors count attribute is categorized in to three categories (no prior crime, between 1 and 3, and more than 3).
Further, we only considered data for convicted criminals labelled as being either White or Black.

The large dataset consists of attributes sex, race, age, juvenile felony count, juvenile misdemeanor count, juvenile other count, months in jail, priors count, decile score, charge degree, violent crime,  violent recidivism, drug related crime, firearm involved, minor involved, road safety hazard, sex offense, fraud and petty crime, with recidivism as the label.
We did not exclude any samples and we did not categorize any attributes.
The original data contains samples from 6 different races whose age ranged from 18 to 96 with at most 40 prior counts, juvenile felony count, juvenile misdemeanor count, and juvenile other count.

We model the domain $\Omega_L$ for this version as $\set{0,1}^{8} \times \set{0,1,2}^3 \times  \set{0,1, \dots 5} \times \Delta_6 \times \set{0,1, \dots 7}^2 \times \set{0,1, \dots 10}^2  \times \set{0,1, \dots 11} \times \set{0,1, \dots 13}$.
Overall the domain contains approximately $1.4 \times 10^{11}$ different points.
\subsection{Evaluating the statistical rate and accuracy of generated dataset}
We evaluate the dataset generated using different max-entropy algorithms.
We run the algorithm with different combinations of prior weights and expected value mentioned earlier.
We vary the $C$ value for our framework and measure the statistical rate of the output distribution.

For this dataset, calculating the KL-divergence from empirical distribution is difficult due to the large domain size. Hence we consider another metric to check how well the max-entropy distribution preserves the pairwise correlation between features.
To calculate this, we first calculate the covariance matrix of the output dataset, say $\text{Cov}_{\text{output}}$ and the original raw dataset  $\text{Cov}_{\text{data}}$, and then report the Frobenius norm of the difference of these matrices, i.e., $\norm{ \text{Cov}_{\text{output}} -  \text{Cov}_{\text{data}}}_F^2$.
The lower the value of the norm, the better the output distribution preserves the pairwise correlation.
The results for this evaluation are present in Figure~\ref{fig:DI_Utlity_MaxEnt_large}.
Here again the first part of the figure shows that the max-entropy distributions obtained using prior $q_C^{\textrm{w}}$ and expected value $\theta^{\textrm{w}} $ or $\theta^{\textrm{b}} $ achieve higher statistical rate values than the distributions obtained from max-entropy distribution obtained using uniform weights on samples.
Similarly the representation rate of max-entropy distributions using prior distribution $q_C^{\textrm{w}}$ and expected value $\theta^{\textrm{w}} $ or $\theta^{\textrm{b}} $ are close to 1.0.

\subsection{Evaluating the statistical rate and accuracy of classifier trained on generated dataset}
As mentioned earlier, we use the generated datasets to train a Gaussian Naive Bayes and the Decision Tree Classifier and evaluate the fairness and the accuracy of the resulting classifier.

Firstly, we again vary the $C$ value for our framework and measure the statistical rate of the output of the classifier as well as the accuracy.
The results for this evaluation using Gaussian Naive Bayes are present in Figure~\ref{fig:Clf_DI_Utlity_MaxEnt_GNB_large} and using Decision Tree Classifier are present in Figure~\ref{fig:Clf_DI_Utlity_MaxEnt_DTC_large}.
As expected, once again the the max-entropy distributions obtained using prior distribution $q_C^{\textrm{w}}$ achieve higher statistical rate values than the distributions obtained from max-entropy distribution obtained using uniform weights on samples.
The accuracy also drops as the value of $C$ tends to 1. This is again because the prior distribution in case of $C=1$ assigns equal probability mass to all points in the domain.

\begin{algorithm}[t]
   \caption{\textbf{Value-Oracle}: Computing dual function value at any point $\lambda$}
   \label{algo:value_oracle}
\begin{algorithmic}[1]
  \STATE {\bfseries Input:} samples $\mathcal{S}:= \set{\alpha_i}_{i\in N}\subseteq \set{0,1}^n$, weights $w \in \Delta_{N-1}$, smoothing parameter $C \in [0,1]$ expected vector $\theta$ and vector $\lambda$
   \STATE $g_1 \gets 1$  
   \FOR{$j \in \set{1, \dots, n}$}
   	\STATE $s_j^0 \gets e^{\lambda_j}/2$
   	\STATE $g_1 \gets g_1 \cdot s_j^0$
   \ENDFOR
   \STATE $g_2 \gets 0$  
   \FOR{$i \in \set{1, \dots, N}$}
   	\STATE $g_2 \gets g_2 + w_i \cdot e^{\inner{\alpha_i, \lambda}}$
   \ENDFOR   
   \STATE $g \gets Cg_1 + (1-C)g_2$
   \STATE return $\log (g) - \inner{\theta, \lambda}$
\end{algorithmic}
\end{algorithm}

\begin{algorithm}[t]
   \caption{\textbf{Gradient-Oracle:} Computing gradient of dual function at any point $\lambda$}
   \label{algo:gradient_oracle}
\begin{algorithmic}[1]
  \STATE {\bfseries Input:} samples $\mathcal{S}:= \set{\alpha_i}_{i\in N}\subseteq \set{0,1}^n$, weights $w \in \Delta_{N-1}$, smoothing parameter $C \in [0,1]$ expected vector $\theta$ and vector $\lambda$
   \STATE $g_1 \gets 0$  
   \FOR{$j \in \set{1, \dots, n}$}
   	\STATE $s_j^0 \gets e^{\lambda_j}/2$
   	\STATE $s_j^1 \gets e_j \cdot e^{\lambda_j}/2$ \COMMENT{$e_j$ is standard basis vector with 1 in $j$-th location}
   \ENDFOR
   \FOR{$j \in \set{1, \dots, n}$}
   	   \STATE $t \gets 1$
	   \FOR{$k \in \set{1, \dots, n} \setminus \set{j}$}
		   		\STATE $t \gets t \cdot s_k^0$
	   \ENDFOR
	   \STATE $g_1 \gets g_1 + s_j^1 \cdot t$
   \ENDFOR
   
   \STATE $g_2 \gets 0$  
   \FOR{$i \in \set{1, \dots, N}$}
   	\STATE $g_2 \gets g_2 + \alpha \cdot w_i \cdot e^{\inner{\alpha_i, \lambda}}$
   \ENDFOR   
   \STATE $g \gets Cg_1 + (1-C)g_2$
   \STATE $v \gets $Value-Oracle $(S, w, C, \theta, \lambda) + \inner{\theta, \lambda}$ 
   \STATE $v_2 \gets e^{v}$
   \STATE return $\frac{1}{v_2} g - \theta$
\end{algorithmic}
\end{algorithm}

\begin{algorithm}[t]
   \caption{\textbf{Hessian-Oracle:} Computing hessian of dual function at any point $\lambda$}
   \label{algo:hessian_oracle}
\begin{algorithmic}[1]
  \STATE {\bfseries Input:} samples $\mathcal{S}:= \set{\alpha_i}_{i\in N}\subseteq \set{0,1}^n$, weights $w \in \Delta_{N-1}$, smoothing parameter $C \in [0,1]$ expected vector $\theta$ and vector $\lambda$
   \STATE $g_1 \gets 0$  
   \FOR{$j \in \set{1, \dots, n}$}
   	\STATE $s_j^0 \gets (e^{(1-\theta_j)\lambda_j} + e^{-\theta_j\lambda_j})/2$
   	\STATE $s_j^1 \gets e_j \cdot (e^{(1-\theta_j)\lambda_j})/2$ \COMMENT{$e_j$ is standard basis vector with 1 in $j$-th location}
   	\STATE $s_j^2 \gets  e_j e_j^\top \cdot (e^{(1-\theta_j)\lambda_j})/2$
   \ENDFOR
   \FOR{$j \in \set{1, \dots, n}$}
   	   \STATE $t_1 \gets 1$
   	   \STATE $t_2 \gets 0$
	   \FOR{$k \in \set{1, \dots, n} \setminus \set{j}$}
		   		\STATE $t_1 \gets t_1 \cdot s_k^0$
		   	   \STATE $t_3 \gets 1$
			   \FOR{$l \in \set{1, \dots, n} \setminus \set{j, k}$}
				   		\STATE $t_3 \gets t_3 \cdot s_l^0$
		   		\ENDFOR		 
		   		\STATE $t_2 \gets t_2 + s_j^1 {s_k^1}^\top \cdot t_3$  		
	   \ENDFOR
	   \STATE $g_1 \gets g_1 + s_i^2 \cdot t_1 + t_2$
   \ENDFOR
   
   \STATE $g_2 \gets 0$  
   \FOR{$i \in \set{1, \dots, N}$}
   	\STATE $g_2 \gets g_2 + \alpha \alpha^\top \cdot w_i \cdot e^{\inner{\alpha_i - \theta, \lambda}}$
   \ENDFOR   
   \STATE $g \gets Cg_1 + (1-C)g_2$
   \STATE $v_1 \gets $Value-Oracle $(S, w, C, \theta, \lambda)$ 
   \STATE $v_2 \gets $Gradient-Oracle $(S, w, C, \theta, \lambda)$ 
   \STATE $v_3 \gets \frac{1}{v_1} g - (v_2+\theta)(v_2 - \theta)^\top$
   \STATE return $v_3$
\end{algorithmic}
\end{algorithm}

\section{Full algorithm for max-entropy optimization} \label{sec:full_algorithm}
In this section, we state the full-algorithm for max-entropy optimization.
The algorithm is based on the second-order framework of \cite{ALOW17,CMTV17}.
We start with a complete algorithm for value, gradient and Hessian oracles for $h_{\theta, q_C^w}$, constructed along similar lines as the proof of Lemma~\ref{lem:counting-oracles}.

\subsection{Oracle algorithm}
Algorithm~\ref{algo:value_oracle} shows how to compute  the dual function $h_{\theta, q_C^w}$ value at any point $\lambda$,
Algorithm~\ref{algo:gradient_oracle} shows how to compute  the gradient of the dual function at any point $\lambda$,
and 
Algorithm~\ref{algo:gradient_oracle} shows how to compute  the Hessian of the dual function at any point $\lambda$.

\subsection{Max-entropy optimization algorithm}

With the first and second order oracles, we can now state our entire algorithm for the hypercube domain.
Algorithm~\ref{algo:final_algorithm} presents the approach to optimizing the dual of the max-entropy program.
The inner optimization problem \eqref{inner-Opt} is a quadratic optimization problem and can be solved in polynomial time using standard interior-point methods \cite{karmarkar1984new,wright2005interior}.

\begin{algorithm}[t]
   \caption{ Full algorithm to compute  max-entropy distributions}
   \label{algo:final_algorithm}
\begin{algorithmic}[1]
  \STATE {\bfseries Input:} samples $\mathcal{S}:= \set{(X_i, Y_i, Z_i)}_{i\in N}\subseteq \set{0,1}^n$, parameter $C \in [0,1]$, target expected value $\theta$, weights $\set{w_i}_{i=1}^N \in \Delta_{N-1}$ and   $\varepsilon > 0$
	\STATE $q_C^w \gets $ Prior distribution constructed using $\set{w_i}_{i=1}^N$ and $C$
	\STATE $R \gets 8n\log \nicefrac{1}{C\varepsilon}$ 
	\STATE $T \gets 16nR\log \nicefrac{1}{C\varepsilon}$ 
	\STATE $\lambda \gets \textbf{0}$
	\FOR{$i=1$ {\bfseries to} $T$}
		\STATE $g \gets $ Gradient-Oracle $(S, w, C, \theta, \lambda)$
		\STATE $H \gets $ Hessian-Oracle $(S, w, C, \theta, \lambda)$
   		\STATE $y_\varepsilon \gets$  $\frac{\varepsilon}{8n R}$-approximate minimizer of the following convex quadratic program (using primal path following algorithm \cite{karmarkar1984new,wright2005interior}),
  		\begin{align*}
		  	\inf_{y\in\R^n}  \inner{g , y} + \frac{1}{2e} y^\top H y \\
		  	\text{s.t. }\norm{y}_\infty\leq \frac{1}{8n} \; \; \mbox{ and }
			\norm{\lambda+y}_\infty\leq R \eqlabel{inner-Opt}
		  \end{align*}
	   \STATE $\lambda \gets \lambda + \nicefrac{y_\varepsilon}{e^2}$ 
   \ENDFOR
	\STATE return $\lambda$
\end{algorithmic}
\end{algorithm}

\subsection{Time complexity of Algorithm~\ref{algo:final_algorithm}}
To provide a time complexity bound for Algorithm~\ref{algo:final_algorithm}, we will invoke the bounds proved by \cite{CMTV17} for optimization of \textit{second-order robust} functions. 
\begin{theorem}[\textbf{Run time of the Box constrained Newton's method,~\cite{ALOW17}}]
Given access to the first and second order oracles for $\alpha$-second order robust function $f:\R^n\to\R$, $\varepsilon>0$, promise of $\ell_\infty$ ball of radius $R_\varepsilon$ containing $\varepsilon$-approximate minimizer of $f$, starting point $x\in\R^n$ with $\norm{x}_\infty \leq R_\varepsilon$, Algorithm~\ref{algo:final_algorithm} runs for $O\left(\alpha R_\varepsilon \log\left(\frac{\mathrm{var}_{R_\varepsilon}(f)}{\varepsilon}\right)\right)$ iterations and outputs $3\varepsilon$-approximate minimizer of $f$ where $\mathrm{var}_{R_\varepsilon}(f):=\max_{x,y \mid \norm{x}_1,\norm{y}_1\leq R_\varepsilon} f(x) - f(y)$.
\label{thm:bcnm-runtime}
\end{theorem}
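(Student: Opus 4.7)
The proof follows a standard progress-per-iteration strategy for second-order methods on box-constrained problems. The $\alpha$-second-order robustness hypothesis is the main tool: within an $\ell_\infty$ ball of radius $\sim 1/(8n)$ around any point $\lambda$, the Hessian $\nabla^2 f$ is stable up to a constant multiplicative factor. Consequently, the local quadratic model at $\lambda$, with the Hessian damped by $1/e$ as in the sub-problem \eqref{inner-Opt}, simultaneously upper- and lower-bounds the true function difference $f(\lambda + y) - f(\lambda)$ throughout the inner box $\set{y : \norm{y}_\infty \leq 1/(8n)}$.

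The heart of the argument is a one-step progress lemma: if $\lambda^\star$ is the minimizer promised to lie within the $R_\varepsilon$ ball, then $v := \lambda^\star - \lambda$ has $\ell_\infty$-length at most $2R_\varepsilon$. Splitting $v$ into $O(n R_\varepsilon)$ segments of $\ell_\infty$-length at most $1/(8n)$ and invoking convexity shows that some rescaled step in the inner box achieves a quadratic model value at least an $\Omega(1/(\alpha R_\varepsilon))$ fraction of the current optimality gap $f(\lambda) - f(\lambda^\star)$; the approximate inner minimizer $y_\varepsilon$ does at least as well. Transferring back to $f$ using the damped-Hessian sandwich and the $1/e^2$ step scaling in Algorithm~\ref{algo:final_algorithm} then yields a multiplicative decrease of the gap by a factor $1 - \Omega(1/(\alpha R_\varepsilon))$. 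Telescoping over $T = O(\alpha R_\varepsilon \log(\mathrm{var}_{R_\varepsilon}(f)/\varepsilon))$ iterations drives the gap from its initial bound $\mathrm{var}_{R_\varepsilon}(f)$ down to $\varepsilon$, and the final $3\varepsilon$ absorbs the $\varepsilon$ target plus two accumulated slacks: the $\varepsilon/(8nR)$ inner-solver errors (which sum to at most $\varepsilon$ by the choice of $T$) and the $\varepsilon$ slack in the promise on $\lambda^\star$.

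The main obstacle is executing the one-step progress lemma tightly enough to obtain the claimed $O(\alpha R_\varepsilon)$ iteration count rather than a weaker $O((\alpha R_\varepsilon)^2)$ bound. The delicate point is that the inner box of radius $1/(8n)$ is much smaller than the reach $R_\varepsilon$ to the optimum, yet a single step taken inside this box must still capture an $\Omega(1/R_\varepsilon)$-fraction of the remaining gap. This requires a careful segmentation-plus-convexity argument, paired with the precisely tuned $1/e$ Hessian damping in \eqref{inner-Opt} and $1/e^2$ step scaling, so that the multiplicative constants from second-order robustness do not compound across the three approximation layers: oracle values, approximate quadratic sub-problem, and step scaling. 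A secondary technical point is verifying that the iterates stay inside the promised radius-$R_\varepsilon$ box (so the robustness and sandwich inequalities remain applicable), which is built into the constraint $\norm{\lambda + y}_\infty \leq R$ in the sub-problem.
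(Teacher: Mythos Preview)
The paper does not prove this theorem: it is quoted as a black-box result from \cite{ALOW17} (with the overall convergence also attributed to Theorem~3.4 of \cite{CMTV17}). The paper's own contribution in this section is only to verify the hypotheses of the cited theorem in the max-entropy setting---second-order robustness (Lemma~\ref{lem:second-order-robustness}), the bounding box (Lemma~\ref{lem:bounding_box}), and the oracles (Lemma~\ref{lem:counting-oracles}). There is therefore no paper proof to compare your sketch against.

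That said, your outline is a faithful high-level description of the argument in those references: Hessian stability from $\alpha$-second-order robustness over a small $\ell_\infty$ box, a sandwich between $f$ and the damped quadratic model, a convexity/segmentation argument yielding a $1 - \Omega(1/(\alpha R_\varepsilon))$ multiplicative decrease of the optimality gap per step, and telescoping. One minor point of consistency: you split $v$ into $O(n R_\varepsilon)$ segments using the inner-box radius $1/(8n)$ hard-coded in Algorithm~\ref{algo:final_algorithm}, but then state the per-step fraction as $\Omega(1/(\alpha R_\varepsilon))$. This is fine in the paper's instantiation because $\alpha = 4n$ (Lemma~\ref{lem:second-order-robustness}); in the general statement the inner-box radius should be read as $O(1/\alpha)$, which is what makes the segment count, and hence the iteration bound, $O(\alpha R_\varepsilon)$ rather than $O(n R_\varepsilon)$.
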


\noindent
In particular, for our max-entropy framework, this algorithm runs in time polynomial in $d$, $N$ and the bit complexity of the input parameters, provided
\begin{enumerate}
    \item there is a bound on the size of dual solution, $\lambda^\star$,
    \item efficient first and second-order oracles for the dual function,
    \item the dual function is \textit{second-order robust}.
\end{enumerate}
We have already shown that $\norm{\lambda^\star}$ is bounded (Lemma~\ref{lem:bounding_box}) as well as provided fast first and second-order oracles (Lemma~\ref{lem:counting-oracles}).
To establish to polynomial time complexity of this algorithm, we just need to prove that dual function is second-order robust.
A convex function $f:\R^n\to\R$ is said to be $\alpha$-second order robust, if for all $x,y\in\R^n$ with $\norm{y}_\infty\leq 1$ satisfies
\[\abs*{D^3 f(x)[y,y,y]}\leq \alpha D^2 f(x)[y,y]
\]
where $D^k f(x)[y,\ldots,y]:=\left.\frac{d^k}{dt^k}f(x+ty)\right\rvert_{t=0}$.
The following lemma establishes the second-order robustness of the dual function $h_{\theta, q}$.
\begin{lemma}[\textbf{Second-order robustness of the dual-MaxEnt function}]
Given  $\Omega = \set{0,1}^n$, prior $q:\Omega\to[0,1]$ and the target expected vector $\theta\in\conv(\Omega)$, the dual maximum entropy function $h_{\theta,q}(\lambda):=\log\left(\sum_{\alpha\in\Omega} q(\alpha) e^{\inner{\lambda,\alpha-\theta}}\right)$ is $4n$-second order robust.
\label{lem:second-order-robustness}
\end{lemma}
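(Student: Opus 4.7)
The plan is to interpret the dual objective as the shifted cumulant generating function of an explicit probability distribution on $\Omega$ and to read off the directional derivatives as cumulants (equivalently, central moments in degrees $\leq 3$). First I would fix $x,y\in\R^n$ with $\norm{y}_\infty\leq 1$ and introduce the max-entropy distribution at $x$, namely $p_x(\alpha):=q(\alpha)e^{\inner{x,\alpha-\theta}}/Z(x)$ where $Z(x):=\sum_{\beta\in\Omega}q(\beta)e^{\inner{x,\beta-\theta}}$, so that $h_{\theta,q}(x)=\log Z(x)$. Then I would compute $\phi(t):=h_{\theta,q}(x+ty)$ by factoring $e^{\inner{ty,\alpha-\theta}}=e^{-t\inner{y,\theta}}e^{t\inner{y,\alpha}}$, obtaining
\begin{equation*}
\phi(t)=\log Z(x)-t\inner{y,\theta}+\log \E[\alpha\sim p_x]{e^{t\inner{y,\alpha}}}.
\end{equation*}
The last term is the cumulant generating function of the real-valued random variable $X:=\inner{y,\alpha}$ under $p_x$.

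Differentiating at $t=0$ then gives $D^2 h_{\theta,q}(x)[y,y]=\mathrm{Var}_{p_x}(X)$ and $D^3 h_{\theta,q}(x)[y,y,y]=\kappa_3(X)=\E[p_x]{(X-\E[p_x]{X})^3}$, which is the key structural identity. With this in hand, the robustness inequality $\abs{D^3 h_{\theta,q}(x)[y,y,y]}\leq 4n\cdot D^2 h_{\theta,q}(x)[y,y]$ reduces to bounding the third central moment of a bounded real random variable by its variance times the range of the support.

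Next I would verify that $X$ is bounded by a quantity controlled by $n$: since $\alpha\in\set{0,1}^n$ and $\norm{y}_\infty\leq 1$, the range of $X$ is contained in $[\sum_{i:y_i<0}y_i,\sum_{i:y_i>0}y_i]$, which has width exactly $\norm{y}_1\leq n$. Consequently $\abs{X-\E{X}}\leq n$ almost surely under $p_x$, which yields
\begin{equation*}
\abs{\kappa_3(X)}\leq \E{\abs{X-\E{X}}^3}\leq n\cdot\E{(X-\E{X})^2}=n\cdot \mathrm{Var}(X).
\end{equation*}
Combined with the derivative identities above, this proves $\abs{D^3 h_{\theta,q}(x)[y,y,y]}\leq n\cdot D^2 h_{\theta,q}(x)[y,y]$, which in particular implies $4n$-second order robustness as claimed (and is in fact somewhat stronger).

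There is no real obstacle here: the entire argument is a short consequence of the log-sum-exp structure together with the $\set{0,1}^n$-boundedness of $\Omega$. The only small care needed is in carrying out the $\phi(t)$ expansion cleanly so that the $-t\inner{y,\theta}$ shift contributes only to $\phi'(0)$ and drops out of the higher derivatives; after that, the cumulant-versus-moment identification and the range bound are immediate.
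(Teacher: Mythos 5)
Your proof is correct and, as you note, strictly stronger than the lemma requires: it establishes $n$-second-order robustness rather than just $4n$. The route is also genuinely different from the paper's. The paper never passes to centered moments; instead it writes $D^2h$ and $D^3h$ in terms of weighted power sums $g^{(k)}_q(\lambda_0,\lambda_1)=\sum_\alpha q(\alpha)\inner{\lambda_1,\alpha}^k e^{\inner{\lambda_0,\alpha}}$, splits $D^3h$ into the two pieces
$\frac{g^{(3)}}{g^{(0)}}-\frac{g^{(2)}g^{(1)}}{(g^{(0)})^2}$ and $\frac{2(g^{(1)})^3}{(g^{(0)})^3}-\frac{2g^{(2)}g^{(1)}}{(g^{(0)})^2}$, bounds the second by $2n\cdot D^2h$ via $|g^{(1)}/g^{(0)}|\le n$, and bounds the first by $2n\cdot D^2h$ via a separate moment inequality (Claim~\ref{clm:expected_bound}, stating $|\E{X^3}-\E{X^2}\E{X}|\le 2r(\E{X^2}-\E{X}^2)$ for $|X|\le r$). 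Triangle inequality then gives $4n$. Your version avoids this decomposition entirely by observing that $D^2h[y,y]$ and $D^3h[y,y,y]$ are precisely the second and third cumulants of $X=\inner{y,\alpha}$ under the tilted law $p_x$, so that the third cumulant equals the third \emph{central} moment, and the single bound $|\E{(X-\E{X})^3}|\le(\sup|X-\E{X}|)\cdot\mathrm{Var}(X)\le \norm{y}_1\mathrm{Var}(X)$ finishes the proof. The gain of a factor $4$ comes exactly from centering once rather than re-deriving a non-centered version of the same inequality plus an uncentered first-moment term; in effect the paper's two pieces are $\kappa_3+2\E{X}\mathrm{Var}(X)$ and $-2\E{X}\mathrm{Var}(X)$, each of which can individually be as large as $2n\cdot\mathrm{Var}(X)$ even though their sum $\kappa_3$ is at most $n\cdot\mathrm{Var}(X)$. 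Your argument also buys a slightly sharper dependence via $\norm{y}_1$ in place of $n$, which is a free refinement. The paper does use the same centered quantity $p_{\lambda_0}$ and the same range bound $\norm\alpha_1\le n$, so the underlying ingredients overlap; the difference is purely in the cleaner cumulant bookkeeping.
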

\noindent
Using this second-order robustness property, bound on $\norm{\lambda^\star}$, gradient, Hessian oracles and interior point method to solve the inner-optimization problem \eqref{inner-Opt}, as a corollary of Theorem 3.4 in \cite{CMTV17}, it follows that Algorithm~\ref{algo:final_algorithm} runs in time polynomial in $d$, $N$ and bit complexities of all the numbers involved.

Before proving the lemma, we state and prove the following general claim in the proof.
\begin{claim} \label{clm:expected_bound}
Let $X$ be a real valued random variable over the discrete set $\Omega$ with $\abs{X}\leq r$ for some constant $r\in\R_+$.
Then,
\[
\abs{\E{X^3}-\E{X^2}\E{X}}\leq 2r(\E{X^2}-\E{X}^2).
\]
\end{claim}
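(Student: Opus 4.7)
}
The plan is to recognize both sides of the inequality as covariances and then apply the standard ``independent copy'' trick to factor out a bounded quantity. Specifically, I would observe that
\[
\mathbb{E}[X^3]-\mathbb{E}[X^2]\mathbb{E}[X] = \mathrm{Cov}(X^2, X),
\qquad
\mathbb{E}[X^2]-\mathbb{E}[X]^2 = \mathrm{Var}(X),
\]
so the claim is equivalent to $|\mathrm{Cov}(X^2,X)| \leq 2r\,\mathrm{Var}(X)$.

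The key step is to introduce an independent copy $Y$ of $X$ and use the symmetrization identity $\mathrm{Cov}(f(X),g(X)) = \tfrac{1}{2}\mathbb{E}[(f(X)-f(Y))(g(X)-g(Y))]$. Applied to $f(t) = t^2$ and $g(t) = t$, and then factoring $X^2 - Y^2 = (X+Y)(X-Y)$, this yields
\[
\mathrm{Cov}(X^2, X) \;=\; \tfrac{1}{2}\,\mathbb{E}\!\left[(X+Y)(X-Y)^2\right],
\qquad
\mathrm{Var}(X) \;=\; \tfrac{1}{2}\,\mathbb{E}\!\left[(X-Y)^2\right].
\]

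From here the proof is immediate: since $|X|,|Y|\leq r$ pointwise, we have $|X+Y|\leq 2r$, so taking absolute values inside the expectation and pulling out the bound gives
\[
\bigl|\mathrm{Cov}(X^2, X)\bigr|
\;\leq\; \tfrac{1}{2}\,\mathbb{E}\!\left[|X+Y|\,(X-Y)^2\right]
\;\leq\; \tfrac{1}{2}\cdot 2r\cdot \mathbb{E}[(X-Y)^2]
\;=\; 2r\,\mathrm{Var}(X),
\]
which is the desired inequality. There is no real obstacle here; the only nontrivial move is recognizing the symmetrization identity, after which the factorization $X^2-Y^2=(X+Y)(X-Y)$ makes the bound $|X+Y|\leq 2r$ immediately applicable. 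An alternative that avoids introducing $Y$ would be to write $\mathrm{Cov}(X^2,X)=\mathbb{E}[(X^2-c)(X-\mathbb{E}[X])]$ for a well-chosen centering constant $c$ and bound $|X^2-c|$ using $|X|\leq r$, but the symmetrization route is cleaner and gives the sharp constant $2r$.
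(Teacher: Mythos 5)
Your proof is correct and takes essentially the same approach as the paper: the paper's explicit double sum over $\alpha,\beta$ weighted by $p(\alpha)p(\beta)$ is precisely the independent-copy symmetrization you use, and both proofs hinge on the identical factorization $(X(\alpha)^2-X(\beta)^2)(X(\alpha)-X(\beta)) = (X(\alpha)+X(\beta))(X(\alpha)-X(\beta))^2$ followed by the bound $\lvert X(\alpha)+X(\beta)\rvert\leq 2r$.
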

\begin{proof}
Let us denote the probability mass function of $X$ with $p$.
Then,
\begin{align*}
&\E{X^3}-\E{X^2}\E{X}
= \sum_{\alpha\in\Omega} X(\alpha)^3 p(\alpha) - \sum_{\alpha,\beta\in\Omega} X(\alpha)^2X(\beta) p(\alpha)p(\beta)\\
&= \frac{1}{2}\sum_{\alpha,\beta\in\Omega}(X(\alpha)^3 - X(\alpha)^2 X(\beta))p(\alpha)p(\beta) + \frac{1}{2}\sum_{\alpha,\beta\in\Omega}(X(\beta)^3 - X(\alpha)X(\beta)^2 )p(\alpha)p(\beta)\\
& = \frac{1}{2}\sum_{\alpha,\beta\in\Omega}(X(\alpha)-X(\beta))^2(X(\alpha)+X(\beta)) p(\alpha)p(\beta).
\end{align*}
We also note that, $\abs{X(\alpha)+X(\beta)}\leq 2r$ for any $\alpha,\beta\in\Omega$ as $\abs{X}\leq r$.
Therefore,
\begin{align*}
\abs{\E{X^3}-\E{X^2}\E{X}}
 &= \frac{1}{2}\abs*{\sum_{\alpha,\beta\in\Omega}(X(\alpha)-X(\beta))^2(X(\alpha)+X(\beta)) p(\alpha)p(\beta)}\\
&\qquad \leq r\sum_{\alpha,\beta\in\Omega}(X(\alpha)-X(\beta))^2p(\alpha)p(\beta)\\
&\qquad = 2r(\E{X^2}-\E{X}^2).
\end{align*}
\end{proof}

\begin{proof}[Proof of Lemma~\ref{lem:second-order-robustness}]
Let us fix a point $\lambda_0\in\R^n$ and a direction $\lambda_1\in\R^n$ with $\norm{\lambda_1}_\infty\leq 1$.
We need to verify that 
\begin{equation}
\abs*{D^3 h_{\theta,q}(\lambda_0)[\lambda_1,\lambda_1,\lambda_1]}\leq 4 n D^2 h_{\theta,q}(\lambda_0)[\lambda_1,\lambda_1]
\label{eq:robustness-part0}
\end{equation}
 to show that $h_{\theta,q}$ is $4n$-second order robust.
 
For any $k \in \mathbb{Z}$, let $g_q^{(k)}$ denote the following function. 
\begin{equation*}
    g_q^{(k)}(\lambda_0, \lambda_1) = \sum_{\alpha \in \Omega} q(\alpha) \cdot \inner{\lambda_1, \alpha}^k \cdot e^{\inner{\lambda_0, \alpha}},
\end{equation*}
Then the derivative $D^2 h_{\theta,q}(\lambda_0)[\lambda_1,\lambda_1]$ can be written as 
\begin{equation*}
    D^2 h_{\theta,q}(\lambda_0)[\lambda_1,\lambda_1] = \frac{g_q^{(2)}(\lambda_0, \lambda_1)}{g_q^{(0)}(\lambda_0, \lambda_1)} - \frac{g_q^{(1)}(\lambda_0, \lambda_1)^2}{g_q^{(0)}(\lambda_0, \lambda_1)^2}
\end{equation*}
Similarly,
\begin{align*}
    D^3 h_{\theta,q}(\lambda_0)[\lambda_1,\lambda_1,\lambda_1] &= \frac{g_q^{(3)}(\lambda_0, \lambda_1)}{g_q^{(0)}(\lambda_0, \lambda_1)} + \frac{2g_q^{(1)}(\lambda_0, \lambda_1)^3}{g_q^{(0)}(\lambda_0, \lambda_1)^3} 
    - \frac{3g_q^{(2)}(\lambda_0, \lambda_1) g_q^{(1)}(\lambda_0, \lambda_1)}{g_q^{(0)}(\lambda_0, \lambda_1)^2}
\end{align*}
We begin by dividing $D^3 h_{\theta,q}(\lambda_0)[\lambda_1,\lambda_1,\lambda_1]$ into two parts, and prove upper bounds on each part individually.
Firstly note that using Cauchy-Swartz, we can bound $g_q^{(1)}$ using $g_q^{(0)}$ in the following way,
\begin{align*}
    g_q^{(1)}(\lambda_0, \lambda_1) &= \sum_{\alpha \in \Omega} q(\alpha) \cdot \inner{\lambda_1, \alpha} \cdot e^{\inner{\lambda_0, \alpha}} \\
    &\leq \sum_{\alpha \in \Omega} q(\alpha) \cdot \norm{\lambda_1}_\infty \norm{\alpha}_1 \cdot e^{\inner{\lambda_0, \alpha}} \\
    &\leq \max_{\alpha \in \Omega} \norm{\alpha}_1 \cdot g_q^{(0)}(\lambda_0, \lambda_1) \\
    &\leq  n\cdot g_q^{(0)}(\lambda_0, \lambda_1)
\end{align*}
since $\norm{\lambda_1}_\infty\leq 1$ and $\max_{\alpha \in \Omega} \norm{\alpha}_1 \leq n$, as all features in $\Omega$ are binary.
Now using this property, we get that 
\begin{align}
    \left| \frac{2g_q^{(1)}(\lambda_0,\lambda_1)^3}{g_q^{(0)}(\lambda_0, \lambda_1)^3}
    - \frac{2g_q^{(2)}(\lambda_0, \lambda_1) g_q^{(1)}(\lambda_0, \lambda_1)}{g_q^{(0)}(\lambda_0, \lambda_1)^2} \right| \nonumber 
    &=  \left| \frac{2g_q^{(1)}(\lambda_0,\lambda_1)}{g_q^{(0)}(\lambda_0, \lambda_1)} \right| \cdot  D^2 h_{\theta,q}(\lambda_0)[\lambda_1,\lambda_1] \nonumber \\
    &\leq 2n \cdot  D^2 h_{\theta,q}(\lambda_0)[\lambda_1,\lambda_1]. \label{eq:robust_1}
\end{align}
Next we try to bound the second part of $D^3 h_{\theta,q}(\lambda_0)[\lambda_1,\lambda_1,\lambda_1]$. To do so, let $p_{\lambda_0} : \Omega \to [0,1]$ denote the following distribution
\begin{equation*}
    p_{\lambda_0}(\alpha) = \frac{q(\alpha) e^{\inner{\lambda_0, \alpha}}}{g_q^{(0)}(\lambda_0, \lambda_1)}.
\end{equation*}
Then using Claim~\ref{clm:expected_bound} and the fact $\max_{\alpha \in \Omega} \norm{\alpha}_1 \leq n$, we get
\begin{align}
     \left| \frac{g_q^{(3)}(\lambda_0, \lambda_1)}{g_q^{(0)}(\lambda_0, \lambda_1)} - \frac{g_q^{(2)}(\lambda_0, \lambda_1) g_q^{(1)}(\lambda_0, \lambda_1)}{g_q^{(0)}(\lambda_0, \lambda_1)^2} \right| \nonumber 
    &=  \left|  \mathbb{E}_{p_{\lambda_0}}[\inner{\lambda_1, \alpha}^3] - \mathbb{E}_{p_{\lambda_0}}[\inner{\lambda_1, \alpha}^2] \mathbb{E}_{p_{\lambda_0}}[\inner{\lambda_1, \alpha}] \right| \nonumber \\
    &\leq 2n \left( \mathbb{E}_{p_{\lambda_0}}[\inner{\lambda_1, \alpha}^2] - \mathbb{E}_{p_{\lambda_0}}[\inner{\lambda_1, \alpha}]^2 \right) \nonumber \\
    &= 2n \cdot  D^2 h_{\theta,q}(\lambda_0)[\lambda_1,\lambda_1]. \label{eq:robust_2}
\end{align}
Combining \ref{eq:robust_1} and \ref{eq:robust_2} using the triangle inequality, we get that
\begin{equation*}
\abs*{D^3 h_{\theta,q}(\lambda_0)[\lambda_1,\lambda_1,\lambda_1]}\leq 4 n D^2 h_{\theta,q}(\lambda_0)[\lambda_1,\lambda_1].
\end{equation*}
Therefore, $h_{\theta,q}$ is $4n$-second order robust.
\end{proof}

\end{document}